\runningtitle{Learning Invariant Graph Representations Through Redundant Information}
\runningauthor{Learning Invariant Graph Representations Through Redundant Information}
\newcommand{\argmax}[1]{\underset{#1}{\arg\max}}
\newcommand{\indep}{\perp \!\!\! \perp}
\newcommand{\uni}[2]{\mathrm{Uni}({#1{:}#2})}
\newcommand{\rdn}[2]{\mathrm{Red}({#1{:}#2})}
\newcommand{\rdncap}[2]{\mathrm{Red_\cap}({#1{:}#2})}
\newcommand{\syn}[2]{\mathrm{Syn}({#1{:}#2})}
\newcommand{\mut}[2]{\mathrm{I}({#1;#2})}
\newcommand{\mutd}[3]{\mathrm{I}_{#1}({#2;#3})}
\newtheorem{lemma}{Lemma}
\newtheorem{definition}{Definition}
\newtheorem{prop}{Proposition}
\newtheorem{propopt}{Proposed Optimization}
\begin{document}

%

%

\twocolumn[

\aistatstitle{Learning Invariant Graph Representations Through\\ Redundant Information}
\vspace{-15pt}
\aistatsauthor{Barproda Halder \And Pasan Dissanayake  \And  Sanghamitra Dutta }

\aistatsaddress{University of Maryland, College Park} ]

\begin{abstract}

Learning invariant graph representations for out-of-distribution (OOD) generalization remains challenging because the learned representations often retain spurious components. To address this challenge, this work introduces a new tool from information theory called Partial Information Decomposition (PID) that goes beyond classical information-theoretic measures. We identify limitations in existing approaches for invariant representation learning that solely rely on classical information-theoretic measures, motivating the need to precisely focus on redundant information about the target $Y$ shared between spurious subgraphs $G_s$ and invariant subgraphs $G_c$ obtained via PID. Next, we propose a new multi-level optimization framework that we call -- Redundancy-guided Invariant Graph learning (RIG) -- that maximizes redundant information while isolating spurious and causal subgraphs, enabling OOD generalization under diverse distribution shifts. Our approach relies on alternating between estimating a lower bound of redundant information (which itself requires an optimization) and maximizing it along with additional objectives. Experiments on both synthetic and real-world graph datasets demonstrate the generalization capabilities of our proposed  RIG framework.

\let\thefootnote\relax\footnotetext{
Correspondence to: B. Halder $<$bhalder@umd.edu$>$.  Presented at WiML Workshop @ NeurIPS 2025.
}
\end{abstract}

\section{Introduction}
Graph Neural Networks (GNNs) have achieved substantial strides in learning from structured data, driving significant advances in a wide range of applications~\citep{kipf2016semi,wu2020comprehensive,dai2024comprehensive}. Despite their success, a critical limitation remains: \emph{GNNs trained on one data distribution fail to generalize well to real-world distribution shifts}. Such shifts can occur due to changes in data collection environments or data generation processes~\citep{ji2023drugood,zou2023gdl,gui2022good}. Distribution shifts can also spuriously correlate with target labels, 
 leading to substantial performance degradation when models are deployed in out-of-distribution (OOD) real-world settings \citep{li2022ood,fan2023generalizing,guo2024investigating}. Thus, OOD generalization is essential for the reliable deployment of GNNs.

Although OOD generalization has been extensively studied in Euclidean domains such as images \citep{ahuja2021invariance,IRM}, applying it on graphs is particularly challenging for two main reasons. First, the distribution shifts on the graphs are complicated. They can occur at both the attribute level and the structure level and can also spuriously correlate with the target labels \citep{yehudai2021local,IRM,GIB,nagarajan2020understanding}. Second, the unavailability of environment or domain labels makes the generalization even harder \citep{hu2020open,ciga,gala}. Recent progress in OOD learning show that predictive models that can solely focus on causal factors of the target can remain robust under a wide range of distributional changes. However, the challenges associated with graph data prohibit the direct adoption of such causal methods~\citep{dir,ciga,fan2022debiasing,li2022learning,yang2022learning}. Existing works~\citep{miao2022interpretable1,GIB,ciga,gala} often incorporate information-theoretic measures to have robust objective functions for improving generalization under distribution shifts. However, achieving feature invariance across varying distribution shifts remains a difficult problem. 

To address the challenge of OOD generalization, we study the integration of invariant graph representation learning with Partial Information Decomposition (PID)~\citep{williams2010nonnegativePID,bertschinger2014quantifying}, an emerging body of work from information theory that goes beyond classical measures like mutual information, conditional mutual information, etc. PID specifically explains the structure of multivariate information, disentangling the joint mutual information $\mut{Y}{C,S}$ in invariant variable $C$ and spurious variable $S$ about target $Y$ into four non-negative terms: uniqueness (in $C$ or $S$), redundancy (common knowledge between $C$ and $S$), and synergy (manifests only when $C$ and $S$ are together).
We seek to address the following question: \emph{Can decomposing the multivariate information between spurious and invariant subgraphs assist in achieving improved generalization in GNNs?}  

We begin by employing Structural Causal Models (SCMs) \citep{pearl2009causality} to characterize the graph generation process under distribution shifts and analyze the interactions between spurious and invariant subgraphs. Building on this causal perspective, we establish theoretical connections between SCMs and PID components by analyzing canonical examples. Our analysis identifies limitations of existing techniques that solely rely on classical information-theoretic measures in their objective functions, establishing the need to go beyond classical measures and precisely focus on redundant information (common knowledge; defined in Section~\ref{sec:preliminaries}) between spurious and invariant subgraphs. We incorporate the redundant information $\rdn{Y}{\hat{G}_s,\hat{G}_c}$ about target label $Y$ between the learned invariant subgraph $\hat{G}_c$ and spurious subgraph $\hat{G}_s$ into the learning objective for robust and generalized graph classification (see Proposed Optimization~\ref{propopt2}). Finally, we introduce an alternating optimization to solve our learning objective that alternates between: (i) estimating the redundant information term (which itself requires an optimization on its lower bound); and (ii) maximizing it along with additional desired objectives. Our main contributions can be summarized as follows: 
\begin{itemize}[leftmargin=*,topsep=0pt,itemsep=0pt]
    \item We establish a theoretical connection between SCMs and Partial Information Decomposition by analyzing canonical examples, offering a new lens to understand information flow in causal graph learning. 
    \item We propose a novel multi-level optimization framework that we call – Redundancy-guided Invariant Graph learning (RIG) - that leverages redundant information between the invariant and spurious subgraphs to achieve out-of-distribution (OOD) generalization on graphs.
    \item We perform comprehensive experiments on both synthetic and real-world datasets to validate our insights and demonstrate the effectiveness of our proposed framework across 4 synthetic and 7 real-world datasets, including Two-piece graph datasets \citep{gala}, DrugOOD \citep{ji2023drugood}, and CMNIST \citep{IRM}.
\end{itemize}

\textbf{Related Works:} \textit{Invariant Graph Learning} has generated significant interest for improving OOD generalization on graphs. \citet{dir} propose an invariant subgraph learning algorithm (DIR) which conducts interventions on the training distribution to obtain causal rationales while filtering out spurious patterns. \citet{ciga} propose an information-theoretic objective (CIGA) to extract the desired invariant subgraphs which are immune to distribution shifts. \citet{gala} propose  Graph invAriant Learning Assistant (GALA) that incorporates an assistant model that needs to be sensitive to graph environment changes or distribution shifts to learn invariant graphs. \citet{fan2022debiasing} introduces a general disentangled GNN framework (DisC) to learn the causal substructure and bias substructure, respectively. \citet{li2022learning} design a GNN-based subgraph generator (GIL) to extract invariant subgraphs, then uses the complementary variant subgraphs to infer latent environment labels, followed by an invariant learning module to improve generalization to unseen graphs.

\textit{Graph data augmentation} aim to enrich the training distribution by introducing perturbations to the node features and graph structures \citep{ding2022data}. \citet{sui2023unleashing} propose a data augmentation strategy, Adversarial Invariant Augmentation (AIA), to address covariate distribution shifts on graphs. \citet{liu2022graph} introduce a new augmentation operation called environment replacement, which automatically creates virtual data examples to improve rationale identification. Similarly, \citet{kong2022robust} propose FLAG (Free Large-scale Adversarial Augmentation on Graphs), an approach that iteratively augments node features with gradient-based adversarial perturbations during training to enhance OOD performance. \textit{Our novelty lies in leveraging a new information-theoretic tool called PID for a more nuanced understanding of spuriousness in graphs, and also incorporating a PID term, redundant information, into an alternating optimization for improved OOD generalization.}

\textit{Partial Information Decomposition}~\citep{williams2010nonnegativePID, venkatesh2024gaussian,goswami2023computing,pakman2021estimating,lyu2024explicit} is an active area of research, with growing applications in neuroscience and machine learning~\citep{tax2017partial,dutta2020information,hamman2023demystifying,ehrlich2022partial,liang2023multimodal,wollstadt2023rigorous,mohamadi2023more,dutta2021fairness,diffusionPID,dissanayake2024quantifying,haldertowards,dutta2023review}. However, the use of PID terms as regularizers in the graph domain is largely unexplored. Only a few studies, e.g.,~\citet{dissanayake2024quantifying} have attempted to incorporate PID terms as regularizers but not for graphs. To the best of our knowledge, we are the first to incorporate redundant information into invariant graph learning objective.

\section{Preliminaries}
\label{sec:preliminaries}
\begin{figure}[!ht]
    \centering
\includegraphics[width=0.8\linewidth]{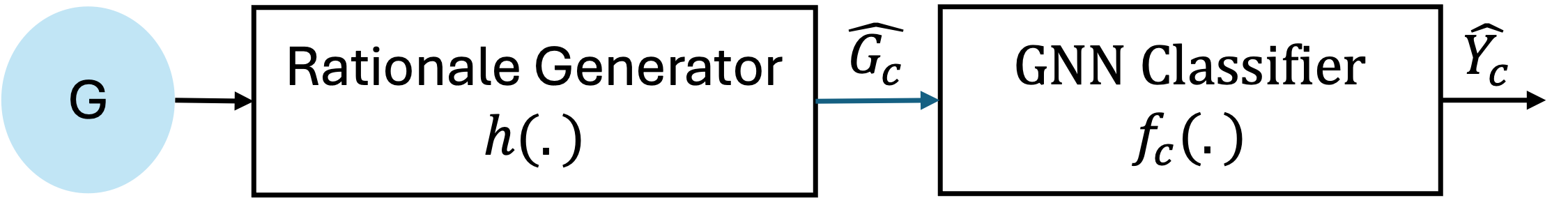}
    \caption{Causally-aligned graph neural network. \label{fig:your_label}}
\end{figure}
In this work, we are interested in out-of-distribution (OOD) generalization in graph classification. Consider a collection of graph datasets $\mathcal{D} = \{ \mathcal{D}^e \}_{e \in \mathcal{E}_{\text{all}}},$ collected from multiple environments $\mathcal{E}_{\text{all}}$, each with slightly shifted distributions, e.g., different locations from where data is collected. The random variable $E$ denotes the environment. Samples $(G_i^e, Y_i^e) \in \mathcal{D}^e$ from the same environment $E=e$ are assumed to be independent and identically distributed (i.i.d.) with a distribution $\mathbb{P}^e$. A causally aligned Graph Neural Network (GNN) model $\rho = f_c \circ h $ typically consists of a rationale generator $h: \mathcal{G} \rightarrow \mathcal{G}_c$ that attempts to learn a meaningful causal subgraph $\hat{G}_c$ for each graph $G$, and a GNN classifier with a classification head $f_c:\mathcal{G}_c \rightarrow \mathcal{Y}$ that predicts the label $\hat{Y}_c$ based on the estimated $\hat{G}_c$, where $\mathcal{G}$ is the graph space and $\mathcal{Y}$ is the target space (see Fig.~\ref{fig:your_label}). To denote parameters of a particular block of the model architecture, we use subscripts, e.g., $h_\beta$ or $f_{\theta_c}$ where the blocks are parameterized by $\beta$ or $\theta_c$, respectively. Our \textbf{goal} is to train a GNN model with graph data from the training environment $\mathcal{D}_{tr} = \{ \mathcal{D}^e \}_{e \in \mathcal{E}_{\text{tr}}\subseteq\mathcal{E}_{\text{all}}}$ that generalizes well to unseen environments during inference. We denote the true and estimated causal subgraphs as \( G_c \) and \( \hat{G}_c \), and the spurious ones as \( G_s \) and \( \hat{G}_s \), respectively. Similarly, the estimated causal and spurious predictions are denoted by $\hat{Y}_c$ and $\hat{Y}_s$, respectively.

\textbf{Background on PID:} The classical measure of the total information that two random variables $A$ and $B$ jointly contain about a target variable $Y$ is given by mutual information $\mut{Y}{A,B}$
(see~\citet{cover2012elements} for a comprehensive background). Mutual information $\mut{Y}{A,B}$ is defined as the Kullback–Leibler (KL) divergence~\citep{cover2012elements} between the joint distribution $P_{YAB}$ and the product of the marginal distributions $P_Y \otimes P_{AB}$, and is equal to zero if and only if $(A, B)$ is statistically independent of $Y$. \emph{Intuitively, this quantity captures the total predictive signal about $Y$ that is jointly present in $(A, B),$ i.e., how well one can learn or infer $Y$ from the pair $(A, B)$.} 

However, classical mutual information $\mut{Y}{A,B}$ does not disentangle the contribution of $A$ and $B$ individually, e.g., what is uniquely contributed by each or redundantly shared between them. To this end, an emerging body of work in information theory called Partial Information Decomposition (PID) \citep{williams2010nonnegativePID} goes beyond classical measures, and disentangles the joint information content $\mut{Y}{A,B}$ about a target variable $Y$ shared among multiple random variables $A$ and $B$ into four non-negative quantities (see Fig. \ref{fig:PID}) as follows:
\begin{align} \label{eq:PID}
\mut{Y}{A,B} &= \uni{Y}{B|A} + \uni{Y}{A|B} \notag \\  & + 
\rdn{Y}{A,B}+ \syn{Y}{A,B}.
\end{align}
Here, redundancy $\rdn{Y}{A,B}$ is the information about $Y$ that is shared by both $A$ and $B$; uniqueness, $\uni{Y}{A|B}$ and $\uni{Y}{B|A}$, denotes the information uniquely provided by $A$ or $B$, respectively; and synergy, $\syn{Y}{A,B}$, captures the information about $Y$ that emerges only when $A$ and $B$ are both present together.
One of the well accepted PID definitions proposed by \citet{bertschinger2014quantifying} is given below: 
\begin{definition}[Unique information \citep{bertschinger2014quantifying}]
\label{def_brojaRedUni} Let $\Delta$ be the set of all joint distributions on $(Y, A, B)$ and $\Delta_P$ be the set of joint distributions with same marginals on $(Y,A)$ and $(Y,B)$ as the true distribution $P_{YAB}$, i.e., $\Delta_P=\{Q_{YAB}{\in} \Delta$: $Q_{YA}=P_{YA}$ and $Q_{YB}=P_{YB}\}$. Then, \begin{align} 
\uni{Y}{A|B}&:=\min _{Q \in \Delta_P} \mutd{Q}{Y}{A|B}.
\end{align}
Here, $\mutd{Q}{Y}{A|B}$ is the conditional mutual information under joint distribution $Q_{YAB}$ instead of $P_{YAB}$.
\end{definition}
\begin{figure}
  \centering
  \includegraphics[width=0.47\columnwidth]{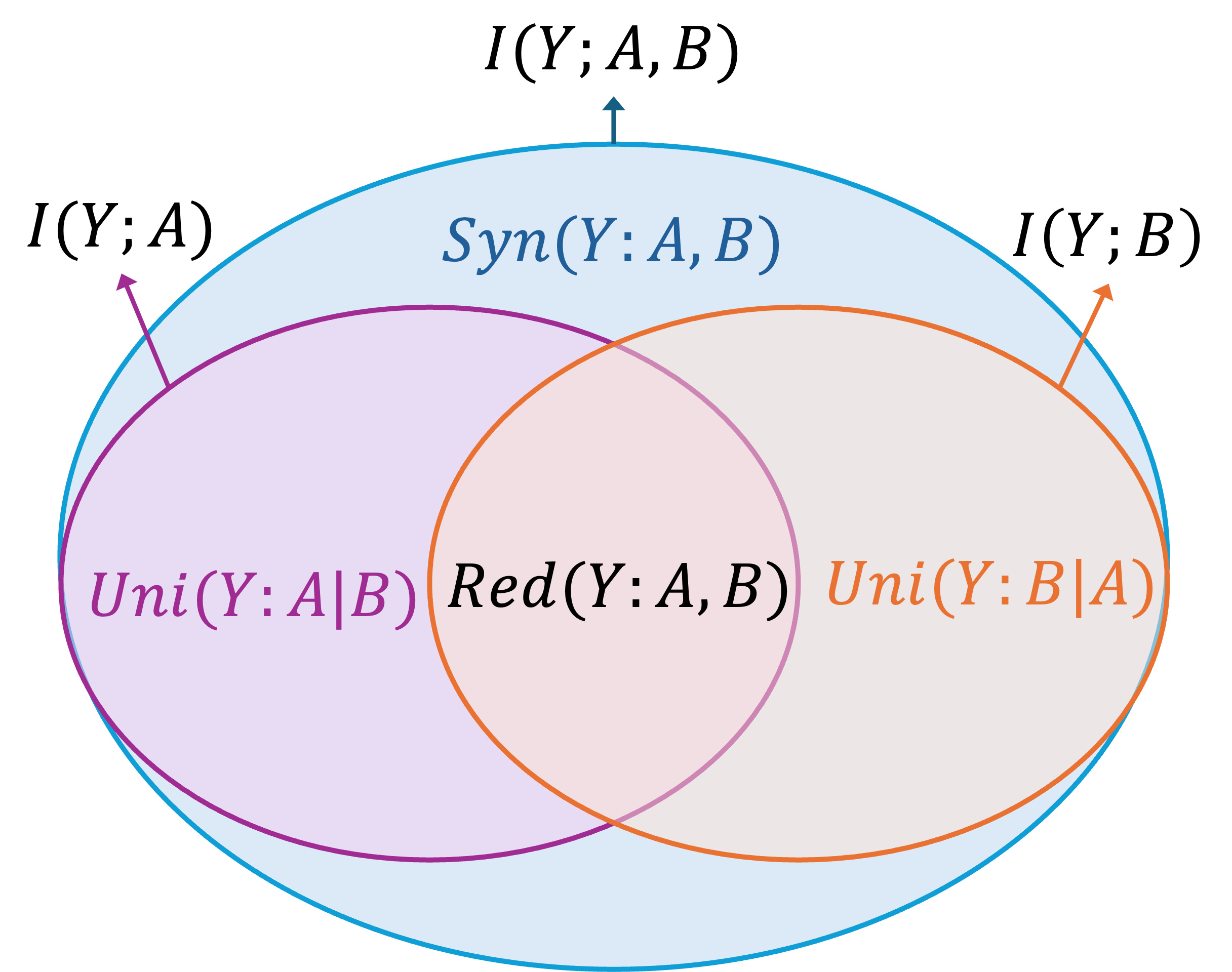}
  \caption{Decomposition of $\mut{Y}{A,B}$.}
  \label{fig:PID}
\end{figure}
Interestingly, defining any one of the PID terms suffices to obtain the others due to the following relationship among the PID terms \citep{bertschinger2014quantifying}: \begin{equation}\mut{Y}{A} = \uni{Y}{A|B}+ \rdn{Y}{A,B}. \label{eq:mi_decomp}\end{equation}
Essentially, $\rdn{Y}{A,B}$ can be interpreted as the sub-volume between $\mut{Y}{A}$ and $\mut{Y}{B}$ (see Fig. \ref{fig:PID}). Hence, $\rdn{Y}{A,B} = \mut{Y}{A} - \uni{Y}{A \mid B}.$ Finally, synergy can be expressed as: \begin{align}
\syn{Y}{A,B} =\; & \mut{Y}{A,B} - \uni{Y}{A|B} \notag \\& - \uni{Y}{B|A} - \rdn{Y}{A,B}, \end{align} which can be computed once both unique and redundant information terms have been obtained.

\begin{figure}[t] 
    \centering
    \begin{subfigure}{0.31\columnwidth}
        \centering
        \includegraphics[width=\linewidth]{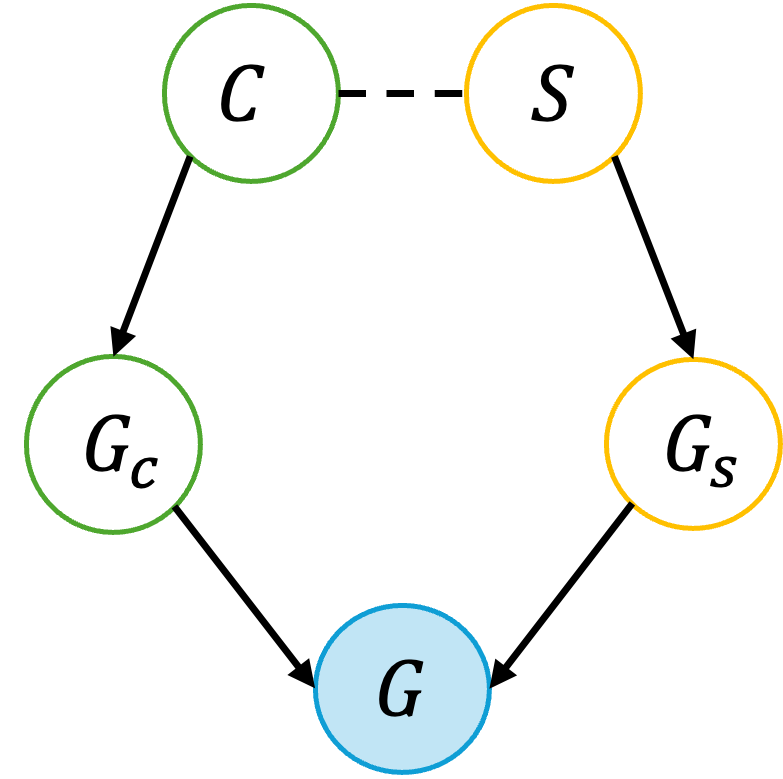}
        \caption{SCM}
        \label{fig:SCM}
    \end{subfigure}
    \hfill
    \begin{subfigure}{0.31\columnwidth}
        \centering
        \includegraphics[width=\linewidth]{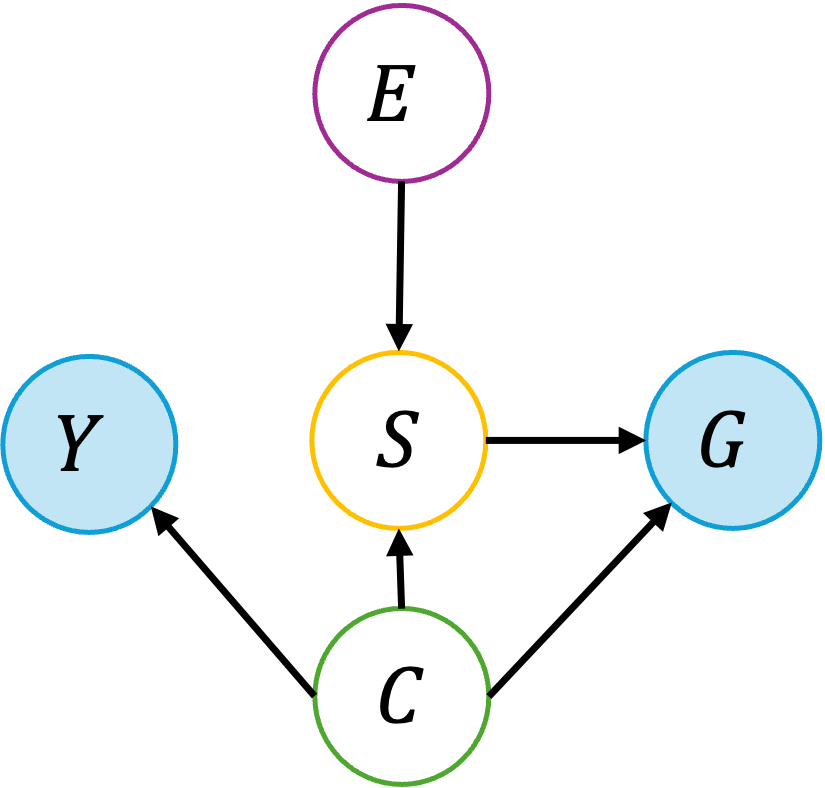}
        \caption{FIIF}
        \label{fig:FIIF}
    \end{subfigure}
    \hfill
    \begin{subfigure}{0.31\columnwidth}
        \centering
        \includegraphics[width=\linewidth]{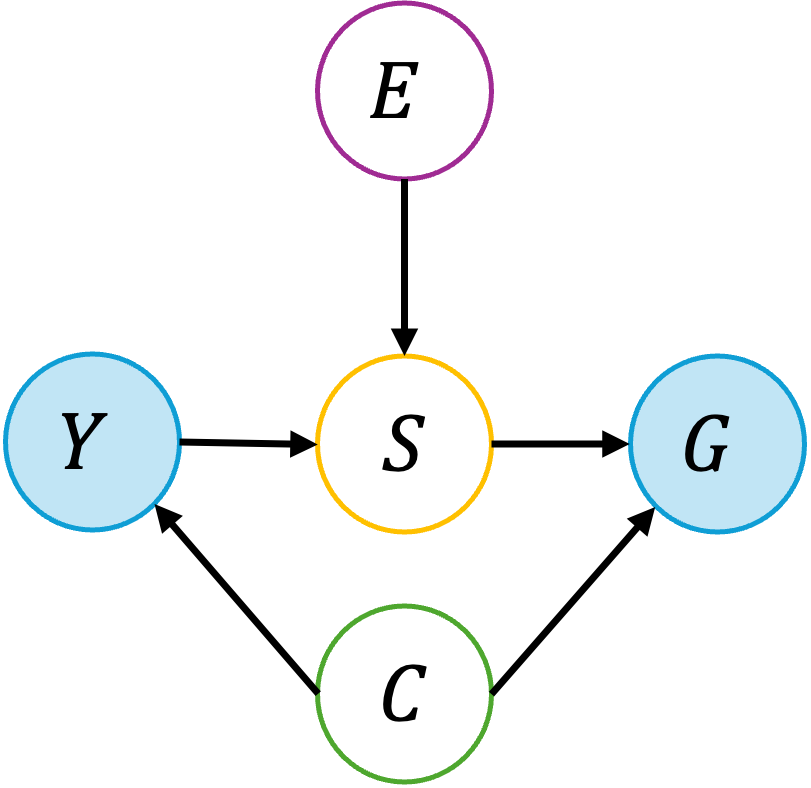}
        \caption{PIIF}
        \label{fig:PIIF}
    \end{subfigure}
    \caption{Graph generation with distribution shifts.}
    \label{fig:all}
\end{figure}

\textbf{Causal Modeling for OOD Graph Generation:} Here, we describe a causal view of the graph generation process. As in previous works \citep{gala,ciga,ahuja2021invariance}, we assume that a graph is generated through a mapping $f_{gen}:\mathcal{Z}\rightarrow\mathcal{G}$ where $\mathcal{Z} \subseteq \mathbb{R}^n$ consists of unobserved, latent variables. We assume that the latent variable from $\mathbb{Z}$ can be decomposed into an invariant part $C \in \mathbb{R}^{n_c}$ (that is not affected by the environment $E$), and a varying (spurious) part $S \in \mathbb{R}^{n_s}$ that is affected by $E$. Here, $n = n_c + n_s$. 

We focus on graph classification tasks. We use a Structural Causal Model (SCM) \citep{pearl2009causality} which captures causal relationships among four key variables: the input graph $G$, the ground-truth label $Y$, the causal part $C$, and the spurious part $S$. Fig.~\ref{fig:SCM} illustrates the SCM where each link denotes a causal relationship between two variables, and the dashed arrow indicates additional dependencies. Here, we assume that $C$ and $S$ control the generation of the observed subgraphs as follows: $G_c := f^{G_c}_{\text{gen}}(C)$, $G_s := f^{G_s}_{\text{gen}}(S)$, and the complete graph $G := f^{G}_{\text{gen}}(G_c, G_s)$. 

Next, we model the interactions between $C$ and $S$ using two types of SCMs: (i) Fully Informative Invariant Features (FIIF) (see Fig. \ref{fig:FIIF}); and (ii) Partially Informative Invariant Features (PIIF) (see Fig. \ref{fig:PIIF}). The two causal models differ depending on the informativeness of the causal part $C$ about the label $Y$ \citep{gala}. For FIIF, $C$ is fully informative of $Y$, i.e., $Y\indep S|C$ and $S$ is directly controlled by $C$. In contrast, for PIIF, we have $Y \not\indep S|C,$ and $S$ is indirectly controlled by $C$ through $Y$. The formal definitions are as follows, where noises are omitted for simplicity:
 \begin{align*}
&\text{(FIIF)} \  Y = f_{\text{inv}}(C); \ 
S = f_{\text{spu}}(C, E); \
G = f_{\text{gen}}(C, S). \\
&\text{(PIIF)}\  Y = f_{\text{inv}}(C); \ 
S = f_{\text{spu}}(Y, E); \ G = f_{\text{gen}}(C,S).\
\end{align*} where $f_{\text{inv}}$ indicates the actual labeling process where label $Y$ for graph $G$ is assigned based on $C$ and $f_{\text{spu}}$ describes how $S$ is affected by $C$ and $E$.

\textbf{Causally-Aligned GNNs:}
Inspired by the concept of structural alignment in CIGA~\citep{ciga,gala}, a causally aligned GNN has two distinct components: (a) a featurizer GNN $h: \mathcal{G} \rightarrow \mathcal{G}_c$ that aims to extract an invariant subgraph \( \hat{G}_c= h(G) \) aligned with the causal substructure \( G_c \); and (b) a classifier GNN \(f_c: \mathcal{G}_c \rightarrow \mathcal{Y} \) that predicts the label \( \hat{Y}_c = f_c(\hat{G}_c) \) based on the learned \( \hat{G}_c \). 
Formally, the objective is to learn \( h \) and \( f_c \) as follows:
\begin{equation}
\max_{f_c, h} \ \mut{\hat{G}_c}{Y} \ \text{such that} \ \hat{G}_c \indep E, \ \hat{G}_c = h(G). \label{eq:objective}
\end{equation}
Here, \( \mut{\hat{G}_c}{Y} \) is the mutual information between the learned invariant subgraph and the target label. The independence constraint $\hat{G}_c {\indep} E$ is to ensure robustness across environments \( E \). However, enforcing independence $\hat{G}_c {\indep} E$  is difficult in practice due to the lack of information about the environment $E$~\citep{IRM,rex,ciga,gala}. Several studies address this challenge by augmenting environment information, incorporating graph information bottleneck, or proposing a contrastive framework \citep{liu2022graph,wu2022handling,dir,ciga,miao2022interpretable1,gala}. 

CIGA \citep{ciga} has two objectives for invariant GNN learning using classical information theory.
\begin{align}
\text{(CIGAv1)}
\max_{f_c, h}\mut{\hat{G}_c}{Y} \ 
\text{s.t.} \ \hat{G}_c \in \argmax{\substack{\hat{G}_c = h(G),\\|\hat{G}_c| \leq p_c}} \mut{\hat{G}_c}{ \hat{G}'_c \mid Y}. \label{eq:cigav1}
\end{align}
Here, $\hat{G}_c = h(G)$, $\hat{G}'_c = h({G}')$, $p_c$ is a size constraint imposed on $\hat{G}_c$, and $G' \sim P(G \mid Y)$, i.e., $G'$ is sampled from the training graphs that share the same label $Y$ as $G$, anticipating that $G$ and ${G}'$ would belong to two different environments. However, simply maximizing $\mut{\hat{G}_c}{Y}$ does not guarantee that the learned features are actually invariant. $\hat{G}_c$ can still contain content from true $G_s$, especially when $G_s$ is spuriously correlated with $Y$ for both FIIF and PIIF, leading to another proposition:
\begin{align}
\text{(CIGAv2)} \ & \max_{f_c, h} \  \mut{\hat{G}_c}{Y} + \mut{\hat{G}_s}{Y}  \nonumber \\ \text{s.t.} & \ 
 \hat{G}_c \in \argmax{\hat{G}_c = h(G)} \mut{\hat{G}_c}{\hat{G}'_c \mid Y}, \nonumber \\
& \mut{\hat{G}_s}{Y} \leq \mut{\hat{G}_c}{Y},
\hat{G}_s = G - h(G). \label{eq:cigav2}
\end{align}

Another related work GALA~\citep{gala} propose an alternative modification to CIGAv1 Eq.~\eqref{eq:cigav1}. They find a new proxy environment assistant model $A$ that samples $\hat{G}_c$ by assuming there exists a subset of training data where $P(Y |G_s)$ varies, while $P(Y |G_c)$ remains invariant, i.e., reduced spuriousness dominance. Incorporating samples from this subset could potentially invalidate the dominance of $G_s$. So, they modify the constraint in the CIGAv1 objective as follows: 
\begin{equation}
\hat{G}_c \in \argmax{\hat{G}^p_c}\ \mut{\hat{G}^p_c}{\hat{G}^n_c \mid Y}.
\end{equation}
Here, \(\hat{G}^p_c=h(G^p)\) where $G^p$ is sampled from a subset dominated by spurious correlations, and \(\hat{G}^n_c=h(G^n)\) is from a subset where invariant correlations prevail over spurious ones. They sample these subsets using an assistant model $A$, typically prone to spurious correlations. 
The subsets are then selected depending on whether $A$'s predictions are correct or not. Let: 
\begin{align*}
\{ \hat{G}_c^p \} = \{ h(G_i^p) \mid A(G_i^p) = Y_i \}, \\
\{ \hat{G}_c^n \} = \{ h(G_i^n) \mid A(G_i^n) \neq Y_i \}.  
\end{align*}

In our work, we identify key limitations of these existing approaches that rely on maximizing classical information-theoretic measures, e.g., $\mut{\hat{G}_c}{Y}$ and $\mut{\hat{G}_s}{Y}$. We will show that the learned subgraph \(\hat{G}_c\) may not faithfully capture the true $G_c$ in both the FIIF and PIIF causal models. 
These limitations motivate us to propose a new objective function that goes beyond classical information-theoretic measures and is based on PID, as we discuss next.

\section{Main Contributions}

\begin{prop}
\label{prop1}
The total predictive information that the invariant variable $C$ and the spurious variable $S$ contain about the target variable $Y$ decomposes into four nonnegative terms:
\begin{align}
\mut{Y}{C,S} &= \uni{Y}{C|S} + \uni{Y}{S|C} \notag \\& + 
\rdn{Y}{C,S}+ \syn{Y}{C,S}.
\end{align}
\end{prop}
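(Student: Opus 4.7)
The plan is to observe that this statement is a direct instantiation of the general Partial Information Decomposition already introduced in Eq.~(1) of the excerpt, applied to the triple $(Y, C, S)$ in place of the generic $(Y, A, B)$. The decomposition identity itself is definitional in the Bertschinger et al.\ framework adopted by the paper, so no separate derivation of the equality is required. Consequently the only substantive content is the non-negativity of each of the four terms, and the proof reduces to verifying that property for the Bertschinger definition.

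For non-negativity, I would argue term by term. The two unique-information quantities $\uni{Y}{C|S}$ and $\uni{Y}{S|C}$ are non-negative directly by Definition~\ref{def_brojaRedUni}, since each is a minimum of a conditional mutual information $\mutd{Q}{Y}{C|S}$, and conditional mutual informations are always non-negative. For redundancy, I would use Eq.~\eqref{eq:mi_decomp} to write $\rdn{Y}{C,S} = \mut{Y}{C} - \uni{Y}{C|S}$, so that non-negativity reduces to showing $\uni{Y}{C|S} \le \mut{Y}{C}$. I would establish this by exhibiting a witness distribution $Q^* \in \Delta_P$ under which $C$ and $S$ are conditionally independent given $Y$, namely $Q^*(y,c,s) = P_Y(y)\,P_{C\mid Y}(c\mid y)\,P_{S\mid Y}(s\mid y)$; this $Q^*$ matches the required pairwise marginals with $P$, and a short chain-rule calculation gives $\mutd{Q^*}{Y}{C\mid S} \le \mut{Y}{C}$, so the minimum in Definition~\ref{def_brojaRedUni} is at most $\mut{Y}{C}$. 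For synergy, I would invoke the identity $\syn{Y}{C,S} = \mutd{P}{Y}{C\mid S} - \uni{Y}{C\mid S}$ obtained by combining Eqs.~\eqref{eq:PID}--\eqref{eq:mi_decomp}; this quantity is non-negative because the true distribution $P_{YCS}$ itself lies in $\Delta_P$ and hence upper-bounds the minimum.

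The main (and only mild) obstacle is the redundancy step, which requires exhibiting a specific feasible distribution in $\Delta_P$ to certify the inequality $\uni{Y}{C|S} \le \mut{Y}{C}$; the conditional-independence construction above is the standard choice from the PID literature and makes the argument essentially one line. Everything else collapses to algebraic manipulation of the identities already recorded in Eqs.~\eqref{eq:PID}--\eqref{eq:mi_decomp} together with elementary properties of (conditional) mutual information.
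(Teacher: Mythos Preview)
Your proposal is correct and in fact exceeds what the paper does: the paper states Proposition~\ref{prop1} without proof, treating it purely as an instantiation of the PID identity Eq.~\eqref{eq:PID} with $(A,B)\mapsto(C,S)$ and implicitly deferring non-negativity to \citet{bertschinger2014quantifying}. Your explicit verification of non-negativity for each of the four terms---via the minimum-of-CMI form for uniqueness, the conditional-independence witness $Q^*$ for redundancy, and the feasibility of $P$ itself for synergy---is sound and more self-contained than the paper's treatment, so there is nothing to correct.
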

We now demonstrate how unique and redundant information relate to the underlying graph generation process and the interaction between latent variables.

\begin{lemma}[FIIF]
Under the FIIF assumption, the true spurious variable $S$ does not have any unique information about the target variable $Y$, i.e., $\uni{Y}{S|C} =0$, but $S$ and $C$ may have redundant information $\rdn{Y}{C,S}$. 

\label{lemma:FIIF}
\end{lemma}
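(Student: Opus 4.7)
The plan is to exploit the BROJA definition (Definition~\ref{def_brojaRedUni}) together with the conditional independence $Y \indep S \mid C$ that the FIIF model explicitly encodes through $Y = f_{\text{inv}}(C)$ and $S = f_{\text{spu}}(C,E)$. Since $Y$ is a deterministic function of $C$ and $S$ depends on $E$ only through $C$, conditioning on $C$ renders $S$ independent of $Y$, so under the true joint $P_{YCS}$ we have $\mutd{P}{Y}{S \mid C} = 0$.

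Next I would invoke Definition~\ref{def_brojaRedUni} with $(A,B) = (S,C)$: the unique information $\uni{Y}{S \mid C}$ is the minimum of $\mutd{Q}{Y}{S \mid C}$ over the set $\Delta_P$ of distributions matching the $(Y,S)$ and $(Y,C)$ marginals of $P$. Since the true distribution $P$ trivially satisfies $P \in \Delta_P$ (it matches its own marginals), it is a feasible point of the optimization. Combined with the nonnegativity of conditional mutual information, this gives
\begin{equation}
0 \leq \uni{Y}{S \mid C} \leq \mutd{P}{Y}{S \mid C} = 0,
\end{equation}
which forces $\uni{Y}{S \mid C} = 0$.

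For the second claim, I would appeal to the PID consistency identity from Equation~\eqref{eq:mi_decomp}, applied to the pair $(S,C)$:
\begin{equation}
\mut{Y}{S} = \uni{Y}{S \mid C} + \rdn{Y}{C,S}.
\end{equation}
The previous step yields $\rdn{Y}{C,S} = \mut{Y}{S}$. Because $S$ and $Y$ share the common ancestor $C$ in the FIIF SCM, $\mut{Y}{S}$ is generically strictly positive, so redundancy is nonzero in general. To make the ``may'' precise, I would point out a canonical minimal example (e.g., take $C$ binary with $Y = C$ and $S$ a noisy copy of $C$ governed by $E$) where $\mut{Y}{S} > 0$, confirming that redundancy can indeed arise.

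The only subtle point, and the one worth flagging, is the careful use of the marginal-constrained set $\Delta_P$: the argument relies on $P$ itself lying in $\Delta_P$, which is immediate but easy to overlook. Everything else is a direct combination of the FIIF structural assumption, the BROJA minimization, and the additive PID identity, so no heavy machinery is needed.
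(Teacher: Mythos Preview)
Your proposal is correct and follows essentially the same route as the paper: use the FIIF conditional independence $Y \indep S \mid C$ to get $\mutd{P}{Y}{S|C}=0$, invoke Definition~\ref{def_brojaRedUni} (with $P\in\Delta_P$ as a feasible point) to conclude $\uni{Y}{S|C}=0$, and then apply Eq.~\eqref{eq:mi_decomp} to obtain $\rdn{Y}{C,S}=\mut{Y}{S}$, which is generically positive. Your write-up is in fact more explicit than the paper's about why the BROJA minimum is attained at zero, and your canonical example mirrors the Gaussian-noise illustration the paper gives immediately after the proof.
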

\begin{proof} 
From the definition of FIIF (Fig. \ref{fig:FIIF}), $C$ is fully informative of $Y$, i.e., $Y {\indep} S|C$. Thus, $I(Y;S|C) = 0$.  Then, Definition~\ref{def_brojaRedUni} gives $\uni{Y}{S|C} =0$. 

Now, from Eq.~\eqref{eq:mi_decomp}, $\rdn{Y}{C,S} = \mut{Y}{S} - \uni{Y}{S|C} = \mut{Y}{S}$ which is positive as long as there is a significant dependence between $Y$ and $S$. 
\end{proof}

To illustrate this nuanced scenario under FIIF, we provide an example. Let $S {=} C+N$, $Y {=} C $ where $N$ is Gaussian $ \mathcal{N}(0, \sigma^2_N)$ and $N{\indep} Y$. Here, $\mut{Y}{S|C}=\mut{C}{C+N|C}= H(C|C) {-} H(C|C+N,C)= 0.$ Now, $\uni{Y}{S|C} \leq \mut{Y}{S|C} = 0$. But, $\rdn{Y}{C,S} = \mut{Y}{S} {-} \uni{Y}{S|C} {=} \mut{Y}{S} >0$. 

Lemma~\ref{lemma:FIIF} highlights that under the FIIF assumption, the true spurious graph $G_s$ (from $S$) might only have redundant information about the target variable $Y$, but no unique information. Thus, maximizing $\mut{\hat{G}_s}{Y}$ (as done in CIGAv2 Eq.~\eqref{eq:cigav2}) which is a sum of both $\uni{Y}{\hat{G}_s|\hat{G}_c}$ and $\rdn{Y}{\hat{G}_s,\hat{G}_c}$ can be misleading, causing deviation from converging to the true $G_c$ and $G_s$ in the FIIF setting. \emph{We contend that one should instead leverage PID to precisely focus on the term $\rdn{Y}{\hat{G}_s,\hat{G}_c}$ rather than the whole of $\mut{\hat{G}_s}{Y}$ to avoid maximizing the $\uni{Y}{\hat{G}_s|\hat{G}_c}$ term.}

\begin{lemma}[PIIF]
\label{lemma:PIIF}
Under the PIIF assumption, the true spurious variable $S$ may have more, equal, or less information about $Y$ than $C$, i.e., $\mut{S}{Y}$ may be greater, less, or equal to $\mut{C}{Y}$. Thus, $\uni{Y}{S|C}$ can be greater, less, or equal to $\uni{Y}{C|S}$.
\end{lemma}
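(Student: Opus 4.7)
The plan is to reduce the second assertion (about unique informations) to the first (about mutual informations), and then to establish the first via three canonical PIIF constructions, one for each of the three possible orderings.

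For the reduction, I would invoke Eq.~\eqref{eq:mi_decomp} applied symmetrically to both sides: $\mut{Y}{C} = \uni{Y}{C|S} + \rdn{Y}{C,S}$ and $\mut{Y}{S} = \uni{Y}{S|C} + \rdn{Y}{C,S}$. Subtracting cancels the common redundancy term, giving the identity
\begin{equation*}
\mut{Y}{C} - \mut{Y}{S} = \uni{Y}{C|S} - \uni{Y}{S|C}.
\end{equation*}
This shows that the ordering of $\uni{Y}{C|S}$ versus $\uni{Y}{S|C}$ exactly tracks the ordering of $\mut{Y}{C}$ versus $\mut{Y}{S}$, so it is enough to exhibit PIIF instances realizing each of the three cases.

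For the constructions, I would instantiate the PIIF SCM $Y = f_{\text{inv}}(C)$, $S = f_{\text{spu}}(Y, E)$ with varying noise levels. For the case $\mut{Y}{C} > \mut{Y}{S}$, take $C \sim \mathrm{Bernoulli}(1/2)$, $Y := C$, and $S := Y \oplus N$ with $N \sim \mathrm{Bernoulli}(p)$ for $p$ close to $1/2$; then $\mut{Y}{C} = 1$ bit while $\mut{Y}{S} = 1 - h_2(p)$ is small. For the equality case, set $p=0$ so that $S = Y$ gives $\mut{Y}{S} = \mut{Y}{C}$. For the reverse case $\mut{Y}{C} < \mut{Y}{S}$, I would leverage the paper's convention that noises are omitted and therefore $f_{\text{inv}}$ is allowed to be stochastic; concretely, take $C = Y + W_1$ with large-variance Gaussian noise $W_1$ (so that $\mut{Y}{C}$ is small) and $S = Y + W_2$ with very small-variance noise $W_2$ (so that $\mut{Y}{S}$ is close to $H(Y)$).

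The main obstacle is the third case, where $S$ carries strictly more information about $Y$ than $C$ does. If one reads $Y = f_{\text{inv}}(C)$ as a deterministic relationship, then $\mut{Y}{C} = H(Y)$ and the Data Processing Inequality applied to the chain $C \to Y \to S$ would force $\mut{Y}{S} \leq \mut{Y}{C}$, ruling out this case. I would therefore explicitly appeal to the stochastic version of the PIIF SCM (as the paper permits when it states ``noises are omitted for simplicity'') so that $Y$ need not be a deterministic function of $C$ and the informativeness of $C$ about $Y$ can be driven arbitrarily low while $f_{\text{spu}}$ is kept nearly noiseless. With this clarification in place, combining the three examples with the identity above completes the proof of both claims.
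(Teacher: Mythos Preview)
Your proposal is correct and follows essentially the same approach as the paper: reduce the unique-information ordering to the mutual-information ordering via Eq.~\eqref{eq:mi_decomp}, then exhibit PIIF instances with tunable noise levels to realize each ordering. The paper uses a single Gaussian additive-noise family ($Y = C + N_c$, $S = Y + N_s$) parameterized by $(\sigma_{N_c}^2, \sigma_{N_s}^2)$ rather than your mix of Bernoulli and Gaussian channels, and it does not spell out the determinism obstacle you flag, but the argument is otherwise identical.
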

\begin{proof}
To prove this result, we provide an example that aligns with the definition of PIIF (Fig. \ref{fig:PIIF}). Let $S = Y+N_s,$ and $Y= C+N_c$ where noise $N_s$ and $N_c$ are standard Gaussian noises with $N_s\sim \mathcal{N}(0, \sigma^2_{N_s})$, $N_c\sim \mathcal{N}(0, \sigma^2_{N_c})$ and ${N_s}\indep Y$, ${N_c}\indep Y$. Now, if $\sigma^2_{N_c} \gg \sigma^2_{N_s}$, then $\mut{Y}{S} > \mut{Y}{C}$ (see Lemma \ref{app:lemma3} in Appendix \ref{app:lemma3}). Similarly, one can also choose the variances $\sigma^2_{N_c}$ and $\sigma^2_{N_s}$ in a manner that leads to the other criterion $\mut{S}{Y} \leq \mut{C}{Y}$. From the definition of PID, $\mut{Y}{S} = \uni{Y}{S|C}  + \rdn{Y}{S,C}$ and $\mut{Y}{C} = \uni{Y}{C|S} + \rdn{Y}{S,C}.$ If $\mut{Y}{S} > \mut{Y}{C}$, we therefore have: $\uni{Y}{S|C}  + \rdn{Y}{S,C} > \uni{Y}{C |S} + \rdn{Y}{S,C}.$ This leads to $\uni{Y}{S|C} > \uni{Y}{C|S}$. 
\end{proof}

From Lemma~\ref{lemma:PIIF}, we further contend that the constraint $\mut{Y}{\hat{G}_s} \leq \mut{Y}{\hat{G}_c}$ (as in CIGAv2 Eq.~\eqref{eq:cigav2}) can be misleading in the PIIF setting, deviating the objective from converging to the true $G_s$ and $G_c$. Intuitively, enforcing the inequality can unintentionally push $\hat{G}_c$ to include parts of true $G_s$. To more precisely control the influence of $G_s$ on $\hat{G}_c$, we propose maximizing only redundant information instead of $\mut{Y}{\hat{G}_s}$ and also eliminate the constraint $\mut{Y}{\hat{G}_s} \leq \mut{Y}{\hat{G}_c}$. Instead, we propose the following optimization problem:
\begin{propopt} 
\label{propopt2}
For a graph distribution and GNN model with a rationale generator $h$ and classifier $f_c$, our optimization objective is:
\begin{align} \label{eq:rgala}
(RIG)\ & \max_{f_c,h} \mut{Y}{\hat{G}_c} + \rdn{Y}{\hat{G}_c,\hat{G}_s} \nonumber \\ & \text{s.t.} \ \hat{G}_c \in \argmax{\hat{G}^p_c} \ \mut{\hat{G}^p_c}{\hat{G}^n_c \mid Y}. 
\end{align}
Here, $\hat{G}_s = G - h(G)$ is the estimated spurious subgraph. Also, \( \hat{G}^p_c \in \{ \hat{G}^p_c = h(G^p) \} \), and \( \hat{G}^n_c \in \{ \hat{G}^n_c = h(G^n) \} \) are the estimated invariant subgraphs. 
\end{propopt}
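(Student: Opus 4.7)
Proposed Optimization~\ref{propopt2} is a definitional specification of the RIG program rather than a theorem, so in place of a conventional proof I would provide a derivation that motivates each ingredient of the objective from the preceding analysis. Concretely, I would decompose the statement into four pieces --- the $\mut{Y}{\hat{G}_c}$ term, the $\rdn{Y}{\hat{G}_c,\hat{G}_s}$ regularizer, the GALA-style assistant constraint on $\hat{G}^p_c,\hat{G}^n_c$, and the definition $\hat{G}_s = G - h(G)$ --- and justify each in turn by appealing to Eq.~\eqref{eq:cigav1}, Eq.~\eqref{eq:cigav2} and Lemmas~\ref{lemma:FIIF} and~\ref{lemma:PIIF}.

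The first term I would keep unchanged from CIGAv1 since every invariant-learning objective needs a predictive coupling between $\hat{G}_c$ and $Y$. To motivate the redundant-information term, I would invoke the PID identity of Eq.~\eqref{eq:mi_decomp} to write $\mut{Y}{\hat{G}_s} = \uni{Y}{\hat{G}_s|\hat{G}_c} + \rdn{Y}{\hat{G}_c,\hat{G}_s}$ and then apply the two lemmas: Lemma~\ref{lemma:FIIF} shows that under FIIF the true $\uni{Y}{S|C}$ component vanishes, so the extra $\uni{}$ piece picked up by the CIGAv2 penalty $\mut{Y}{\hat{G}_s}$ is pure nuisance signal, while Lemma~\ref{lemma:PIIF} shows that under PIIF the inequality constraint $\mut{Y}{\hat{G}_s} \leq \mut{Y}{\hat{G}_c}$ need not hold at the ground truth. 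Both observations point to the same surgical fix, namely retain only $\rdn{Y}{\hat{G}_c,\hat{G}_s}$ and drop the inequality. For the constraint block I would inherit $\hat{G}_c \in \argmax{\hat{G}^p_c} \mut{\hat{G}^p_c}{\hat{G}^n_c \mid Y}$ directly from GALA, since in the absence of environment labels it is the standard assistant-based proxy for the invariance requirement $\hat{G}_c \indep E$ appearing in Eq.~\eqref{eq:objective}; the identity $\hat{G}_s = G - h(G)$ is then forced once $h$ has been chosen to extract $\hat{G}_c$.

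The principal obstacle in turning this specification into an implementable program is that $\rdn{Y}{\hat{G}_c,\hat{G}_s}$ is itself defined through the Bertschinger--Rauh minimization of Definition~\ref{def_brojaRedUni} over the marginal-preserving simplex $\Delta_P$, which is not directly a differentiable function of the featurizer parameters. My plan is to pose a bi-level alternating scheme: an outer gradient step on $(f_c,h)$ that maximizes $\mut{Y}{\hat{G}_c}$ plus a differentiable lower bound on $\rdn{Y}{\hat{G}_c,\hat{G}_s}$, and an inner step that tightens that bound by approximately solving the marginal-preserving conditional-mutual-information minimization. Designing a tight, differentiable surrogate for $\rdn{}$ that is compatible with stochastic gradient training on graph data is where I expect the main technical difficulty to lie, and it is the step I would spend the most care on when realizing RIG in practice.
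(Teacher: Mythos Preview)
Your derivation of the objective is essentially the paper's own: you correctly treat Proposed Optimization~\ref{propopt2} as a specification rather than a theorem, and your four-part justification (the $\mut{Y}{\hat{G}_c}$ term from CIGAv1, the replacement of $\mut{Y}{\hat{G}_s}$ by $\rdn{Y}{\hat{G}_c,\hat{G}_s}$ via Eq.~\eqref{eq:mi_decomp} and Lemmas~\ref{lemma:FIIF}--\ref{lemma:PIIF}, the inherited GALA constraint, and the complement $\hat{G}_s = G - h(G)$) is exactly the argument the paper gives leading up to Eq.~\eqref{eq:rgala}.

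The one place your sketch diverges is the inner step of the alternating scheme. You propose tightening the redundancy bound by approximately solving the Bertschinger minimization of Definition~\ref{def_brojaRedUni} over $\Delta_P$. The paper instead lower-bounds $\rdn{Y}{\hat{G}_c,\hat{G}_s}$ by the Griffith--Koch intersection information $\rdncap{Y}{\hat{G}_c,\hat{G}_s}$ (Definition~\ref{def:griffithRed}), which is a \emph{maximization} of $\mut{Y}{Q}$ over deterministic common refinements $Q = f_{\theta_c}(\hat{G}_c) = f_{\theta_s}(\hat{G}_s)$. This choice is what makes Step~1 tractable: $f_{\theta_c}$ and $f_{\theta_s}$ are parameterized as GNN encoders, $\mut{Y}{Q}$ is surrogated by a cross-entropy head, and the equality constraint becomes an $\ell_2$ matching penalty (Eq.~\eqref{eq:step1}). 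Your Bertschinger-based inner loop would instead require a convex program over the marginal-preserving simplex at each alternation, which is harder to couple to gradient updates of $h$. Both are legitimate lower-bound strategies, but the paper's $I_\cap$ route buys a fully neural, end-to-end differentiable surrogate at the cost of a looser bound.
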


\subsection{RIG: Our Proposed Framework for Invariant Graph Learning}
\label{sec:framework}

Solving the objective function in Proposed Optimization~\ref{propopt2} is nontrivial since it involves redundant information, and estimating $\rdn{Y}{\hat{G}_c,\hat{G}_s}$ itself requires solving an additional optimization problem. \emph{To make this optimization problem tractable in practice, we introduce an alternating optimization strategy that iteratively alternates between estimating redundant information and maximizing the objective in Eq.~\eqref{eq:rgala}.} This procedure helps disentangle misleading information from invariant subgraphs, thereby enhancing out-of-distribution generalization.

\begin{figure}[t] 
    \centering
    \begin{subfigure}{0.92\columnwidth}
        \centering
        \includegraphics[width=\linewidth]{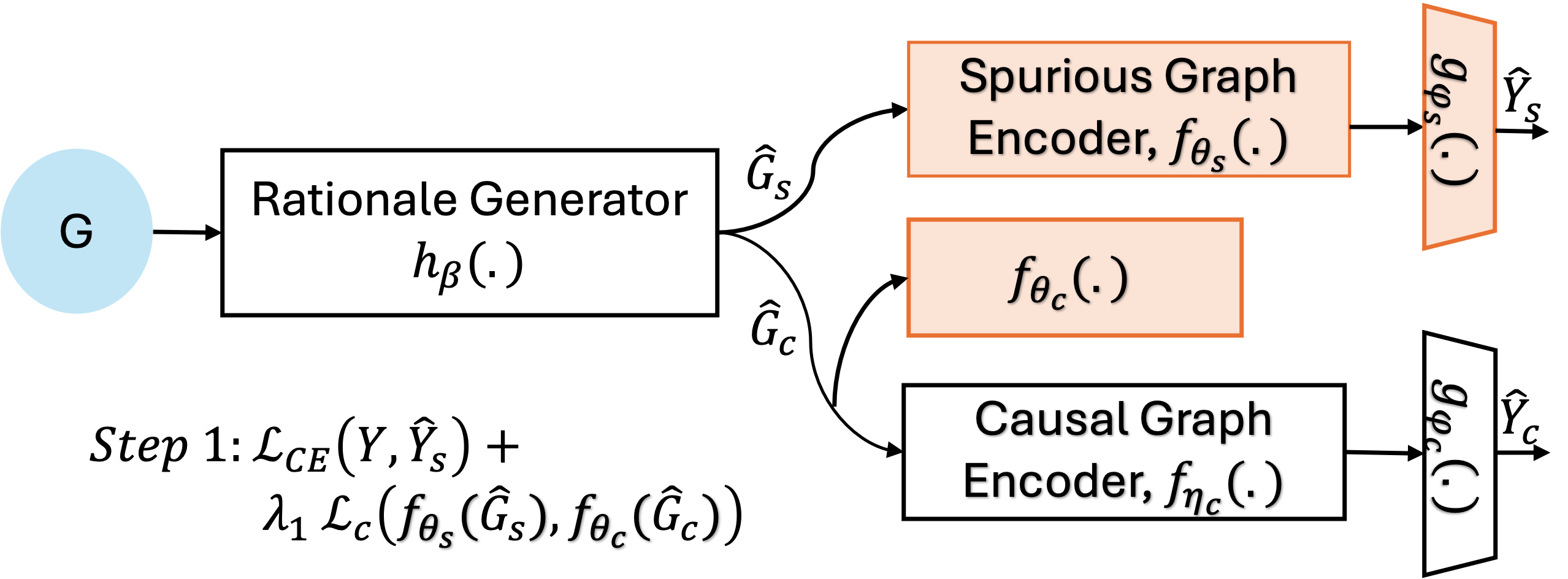}
        \caption{Step 1: The parameters $\theta_s$, $\theta_c$, and $\phi_s$ are updated during training, while other parameters remain fixed.}
        \label{fig:step1}
    \end{subfigure}
    \vspace{2mm}
    \begin{subfigure}{0.92\columnwidth}
        \centering
        \includegraphics[width=\linewidth]{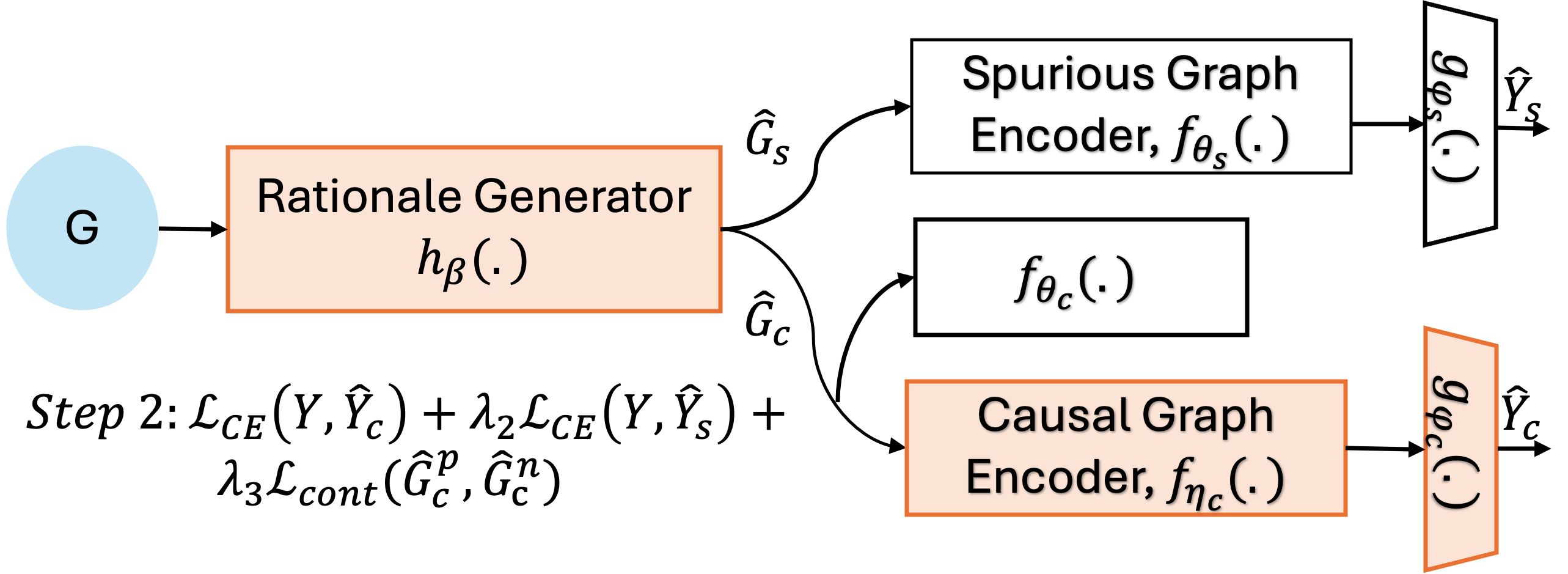}
        \caption{Step 2: The parameter $\beta$, $\eta_c$, and $\phi_c$ are updated during training, while the rest is kept frozen.}
        \label{fig:step2}
    \end{subfigure}
    \caption{Proposed redundancy-based invariant graph learning framework. Highlighted in orange are the components that are being updated in each step.}
    \label{fig:framework}
\end{figure}

\RestyleAlgo{ruled}
\SetKw{KwInit}{Initialize}
\SetKwInOut{KwData}{Input}
\SetKwInOut{KwOut}{Output}

\begin{algorithm}[!ht]
\caption{Redundant Information-based Invariant Graph Learning}
\KwData{Training data $\mathcal{D}_{tr}$; environment assistant $A$; rationale generator $h$; encoder $f$; classification head $g$; warm-up epochs $e_w$; epoch lengths $e_1$ and $e_2$; maximum training epochs $e$; batch size $b$.}

Initialize environment assistant $A$.

\For{$i \in \{1,\dots,e\}$}{
    \If{$i< e_w$}{
        Calculate $\mathcal{L}_{\text{CE}}(Y,\hat{Y}_c)$;\\
        Update the parameters $\beta, \theta_s, \phi_s, \theta_c, \eta_c,$ and $\phi_c$ via gradients to optimize Eq. \ref{eq:step0}
        \tcp*{Step 0}}
    \Else{
     cycle = $((i - e_w) \bmod  (e_1+e_2);$\\
     \If{$cycle < e_1 $}{
        Calculate $\mathcal{L}_{r}$ in Eq.~\eqref{eq:step1};\\
        Update the parameters $\theta_c$, $\theta_s$ and $\phi_s$ via gradients to minimize Eq.~\eqref{eq:step1} \tcp*{Step 1}}
    \Else{
    Sample a batch of data $\{G_i, Y_i\}_{i=1}^b$ from $\mathcal{D}_{tr}$\;
    Obtain predictions $\{\hat{y}_e^{(i)}\}_{i=1}^b$ using k-means clustering on the subgraphs by $A$\;
    
    \For { each sample $(G_i, Y_i) \in \{G_i, Y_i\}_{i=1}^b$}{
    Find \emph{positive graphs} ${G^p}$ with the same $Y_i$ but different $\hat{y}_e^{(i)}$\;
    Find \emph{negative graphs} ${G^n}$ with different $Y_i$ but the same $\hat{y}_e^{(i)}$\;
    Calculate the objective in Eq.~\eqref{eq:step2}\;
    Update the parameters $\beta,\eta_c,$ and $\phi_c$ via gradients optimizing Eq. \ref{eq:step2}\;
    \tcp{Step 2}
    }
    }}
}
\KwOut{final model $g_{\phi_c}\circ f_{\eta_c}\circ h_\beta$}
\label{algo:rig}
\end{algorithm}

\textbf{Optimization Objective.} 
We begin by designing a GNN architecture with two branches (see Fig.~\ref{fig:framework}) to separately capture the causal component \( G_c \) and the spurious component \( G_s \) from the input graph \( G \). The model consists of two main components: (i) a shared rationale generator \( h_\beta: \mathcal{G} \rightarrow {\mathcal{G}}_c, {\mathcal{G}}_s \), which decomposes the input graph into an estimated causal subgraph \( \hat{G}_c \) and a non-causal (spurious) subgraph \( \hat{G}_s \); and (ii) a GNN classifier $f_c$ which we decompose into a GNN encoder $f$ and a classification head $g$. The encoder \( f \) maps the subgraph to a representation, and the classifier head \( g \) makes predictions based on this representation. The final prediction $\hat{Y}_c$ is produced by the causal branch through the classifier \( g_{\phi_c} \), which operates on the estimated causal subgraph \( \hat{G}_c \). A parallel branch uses the spurious subgraph \( \hat{G}_s \) to make dummy predictions \( \hat{Y}_s \) from it.

Next, we propose a three-step optimization framework to solve Eq.~\ref{eq:rgala} (see Algorithm \ref{algo:rig} and Fig. \ref{fig:framework}).

\textbf{Step 0: Warm Up.} In this step, we train the proposed architecture in Fig.~\ref{fig:framework} using the following unconstrained objective: $\max_{f_c,h} \; \mut{Y}{\hat{G}_c}$. In practice, this can be implemented by minimizing a standard classification loss, such as the cross-entropy loss \citep{GIB}, defined as:
\begin{equation}
\label{eq:step0}
\min_{\{\beta, \theta_s, \phi_s, \theta_c, \eta_c, \phi_c\}} \mathcal{L}_{\text{CE}}(Y,\hat{Y}_c).
\end{equation}
where $\hat{Y}_c$ is the predicted label based on the estimated invariant representation $\hat{G}_c$. Due to the model's inherent tendency toward shortcut learning, the initial representation $\hat{G}_c$ may include components from the spurious subgraph $G_s$.

 \textbf{Step 1: Estimating Redundant Information.} Next, we freeze all parameters except $\theta_c$, $\theta_s$ and $\phi_s$, and estimate redundant information about $Y$ that is embedded in $\hat{G}_c$ and $\hat{G}_s$. For this estimation, we first observe that the redundant information $\rdn{Y}{\hat{G}_s,\hat{G}_c}$~\citep{bertschinger2014quantifying} is lower-bounded by a term called intersection information~\citep{griffith2014intersection,griffithRedInfo} denoted by $\rdncap{Y}{\hat{G}_s,\hat{G}_c}$ that is easier to estimate. Since our objective is to maximize $\rdn{Y} {\hat{G}_s,\hat{G}_c}$, maximization of the lower bound $\rdncap{Y}{\hat{G}_s,\hat{G}_c}$ serves our purpose \citep{dissanayake2024quantifying}. Therefore, we first estimate the intersection information from \citet{griffith2014intersection}, which is defined as follows:  

\begin{definition}[$I_\cap$ measure \citep{griffith2014intersection}]
\label{def:griffithRed}
\begin{equation}
\begin{split}
&\rdncap{Y}{\hat{G}_s,\hat{G}_c} = \max_{P(Q|Y)} \mut{Y}{Q} \ \\ &\text{s.t.}  \exists f_{\theta_c}, f_{\theta_s} \text{ with } Q = f_{\theta_c}(\hat{G}_c)=f_{\theta_s}(\hat{G}_s).
\end{split}
\end{equation}
\end{definition} Here $f_{\theta_c}$, $f_{\theta_s}$ are deterministic functions and $Q$ is a random variable capturing the shared information component between $\hat{G}_s$ and $\hat{G}_c$. For practical implementation, we select $Q$ in Definition~\ref{def:griffithRed} as $Q = f_{\theta_s}(\hat{G}_s)$, where both $f_{\theta_s}(\cdot)$ and $f_{\theta_c}(\cdot)$ are parameterized using GNNs.  With the substitution $Q = f_{\theta_s}(\hat{G}_s)$, Definition~\ref{def:griffithRed} leads to the following optimization problem:
\begin{equation}
\label{eq:red}
\max_{\theta_s, \theta_c} \mut{Y}{f_{\theta_s}(\hat{G}_s)} \ \text{s.t.} \ {f_{\theta_s}(\hat{G}_s) = f_{\theta_c}(\hat{G}_c)}.
\end{equation}

We approximately maximize $\mut{Y}{f_{\theta_s}(\hat{G}_s)}$ by minimizing the cross-entropy loss between $\hat{Y}_s$ and $Y$, where $\hat{Y}_s = g_{\phi_s} (f_{\theta_s}(\hat{G}_s))$ and the constraint is added as a regularizer. In effect, we minimize the following loss function with respect to $\theta_s$, $\theta_c$, and $\phi_s$:\begin{equation}
\label{eq:step1}
\mathcal{L}_r(\theta_s, \theta_c, \phi_s) = 
 \mathcal{L}_{CE}(Y, \hat{Y}_s) + \lambda_1 \mathcal{L}_{c}(f_{\theta_s}(\hat{G}_s), f_{\theta_c}(\hat{G}_c)) 
\end{equation}


Here, $\lambda_1$ is a positive hyperparameter and $\mathcal{L}_{c}(f_{\theta_s}(\hat{G}_s),f_{\theta_c}(\hat{G}_c)) = \frac{1}{U V}\sum^U_{u=1}\sum^V_{v=1}D^2_{u,v}$ where $D = f_{\theta_s}(\hat{G}_s)-f_{\theta_c}(\hat{G}_c) \in \mathbb{R}^{U\times V}$. The $\mathcal{L}_{c}$ term enforces $f_{\theta_s}(\hat{G}_s) \approx f_{\theta_c}(\hat{G}_c)$, so that the constraint in Eq.~\eqref{eq:red} can be satisfied. Solving optimization Eq.~\eqref{eq:step1} ultimately leads to a rough estimate of the intersection information at the end of this step: $\rdncap{Y}{\hat{G}_s,\hat{G}_c} \approx \mut{Y}{Q} = \mut{Y}{f_{\theta_s}(\hat{G}_s)}$.

It may be noted that the module $f_{\theta_c}(\cdot)$ is incorporated as a separate channel to estimate $Q$, without interfering with the functionality of the causal graph encoder $f_{\eta_c}(\cdot)$ (see Fig.~\ref{fig:framework}).

\textbf{Step 2: Maximizing Objective.} Finally, we freeze $\theta_s$, $\theta_c$, and $\phi_s$ and minimize the following loss function to effectively maximize the desired objective in Proposed Optimization~\ref{propopt2}.
\begin{align}
\label{eq:step2}
\mathcal{L}(\beta,\eta_c, \phi_c) 
&= \mathcal{L}_{CE}(Y, \hat{Y}_c) 
+ \lambda_2 \mathcal{L}_{CE}(Y, \hat{Y}_s) \notag \\
&\quad + \lambda_3 \mathcal{L}_{cont}(\hat{G^p_c}, \hat{G^n_c}).
\end{align}
Here, $\lambda_2$ and $\lambda_3$ are positive scalar hyperparameters and $\mathcal{L}_{CE}(Y, \hat{Y}_c)$ is minimized as a proxy for maximizing $\mut{Y}{\hat{G}_c}$ as per Proposed Optimization~\ref{propopt2} (Eq.~\eqref{eq:rgala}). Similarly, minimizing the cross-entropy loss $\mathcal{L}_{CE}(Y, \hat{Y}_s)$ now effectively promotes the maximization of 
redundant information $\rdn{Y}{\hat{G}_c,\hat{G}_s}$ as per Proposed Optimization~\ref{propopt2} since: (i) $\rdn{Y}{\hat{G}_c,\hat{G}_s}
\geq \rdncap{Y}{\hat{G}_s,\hat{G}_c}$;  (ii) Step 1 ensured that $\rdncap{Y}{\hat{G}_s,\hat{G}_c} \approx \mut{Y}{f_{\theta_s}(\hat{G}_s)}$; and (iii) Since $\hat{Y}_s = g_{\phi_s} (f_{\theta_s}(\hat{G}_s))$.  


\begin{table*}[!htbp]
\centering
\small
\setlength{\tabcolsep}{4pt}
\caption{Test performance (\%) on real-world graphs with complex distribution shifts (mean $\pm$ std).}
\label{tab:drugood}
\begin{tabular}{lccccccc}
\toprule
\textbf{Methods} & \textbf{EC50-Assay} & \textbf{EC50-Scaffold} & \textbf{EC50-Size} & \textbf{Ki-Assay} & \textbf{Ki-Scaffold} & \textbf{Ki-Size} & \textbf{CMNIST} \\
\midrule
ERM  & 69.34\textsmaller{$\mathbin{\pm}$ 2.35} & 62.12 \textsmaller{$\mathbin{\pm}$2.73} & 62.39\textsmaller{$\mathbin{\pm}$1.03} & 73.72\textsmaller{$\mathbin{\pm}$2.22} & 68.31\textsmaller{$\mathbin{\pm}$2.42} & 73.84\textsmaller{$\mathbin{\pm}$4.35} & 20.82\textsmaller{$\mathbin{\pm}$3.82} \\
GREA  & 71.15{\scriptsize$\pm$2.09} & 63.79{\scriptsize$\pm$1.00} & 60.32{\scriptsize$\pm$1.53} & 72.52{\scriptsize$\pm$3.80} & 63.86{\scriptsize$\pm$7.35} & 69.24{\scriptsize$\pm$3.78} & 14.75{\scriptsize$\pm$2.45} \\
GSAT  & 75.23{\scriptsize$\pm$2.33} & 65.56{\scriptsize$\pm$0.35} & 62.85{\scriptsize$\pm$1.07} & 72.78{\scriptsize$\pm$1.86} & 72.59{\scriptsize$\pm$1.52} & 72.50{\scriptsize$\pm$1.25} & 16.16{\scriptsize$\pm$3.42} \\
GIL   & 70.85{\scriptsize$\pm$2.59} & 62.93{\scriptsize$\pm$1.02} & 63.19{\scriptsize$\pm$1.91} & 77.00{\scriptsize$\pm$1.16} & 72.81{\scriptsize$\pm$0.95} & 74.78{\scriptsize$\pm$1.82} & 14.43{\scriptsize$\pm$3.06} \\
CAL   & 76.45{\scriptsize$\pm$2.82} & 66.10{\scriptsize$\pm$1.01} & 63.28{\scriptsize$\pm$1.55} & 74.06{\scriptsize$\pm$5.65} & 71.30{\scriptsize$\pm$1.50} & 74.21{\scriptsize$\pm$2.64} & 33.45{\scriptsize$\pm$13.56} \\
CIGAv2& 74.31{\scriptsize$\pm$1.40} & 65.80{\scriptsize$\pm$1.06} & 64.05{\scriptsize$\pm$0.33} & 77.58{\scriptsize$\pm$2.32} & 71.53{\scriptsize$\pm$1.01} & 70.19{\scriptsize$\pm$7.25} & 22.56{\scriptsize$\pm$12.29} \\
GALA  & 76.43{\scriptsize$\pm$2.06} & 65.54{\scriptsize$\pm$1.55} & 63.93{\scriptsize$\pm$1.13} & 77.81{\scriptsize$\pm$2.58} & 73.81{\scriptsize$\pm$1.64} & 76.80{\scriptsize$\pm$2.51} & 68.95{\scriptsize$\pm$0.45} \\
RIG (ours)   & \textbf{76.78{\scriptsize$\pm$1.77}} & \textbf{67.20{\scriptsize$\pm$0.92}} & \textbf{64.20{\scriptsize$\pm$1.48}} & \textbf{78.42{\scriptsize$\pm$1.26}} & \textbf{74.16{\scriptsize$\pm$1.18}}& \underline{76.53{\scriptsize$\pm$1.35}} & \textbf{69.00{\scriptsize$\pm$0.66}} \\
\bottomrule
\end{tabular}
\end{table*}

Lastly, $\mathcal{L}_{cont}$ is the contrastive loss, as defined in~\citet{gala}. $\mathcal{L}_{cont}$ approximates the conditional mutual information $\mut{\hat{G}^p_c}{\hat{G}^n_c \mid Y}$ in the constraint of Proposed Optimization~\ref{propopt2} (see Appendix \ref{app:objective} for more details). To obtain proper subsets $\{G^p\}$ and $\{G^n\}$, following~\citet{gala}, we implement an assistant model $A$ in Algorithm \ref{algo:rig} using ERM (Empirical Risk Minimization). Since ERM tends to learn the most dominant features, the assistant model $A$ tends to often rely on spurious subgraphs $G_s$ to make predictions $\hat{Y}$. Based on this behavior, we define $\{G^p\}$ as the set of samples for which $A$ predicts the correct label, and $\{G^n\}$ as the set of samples where $A$'s prediction is incorrect. We continue to alternate between steps 1 and 2 (see Algorithm \ref{algo:rig}) and effectively optimize Eq.~\eqref{eq:rgala} in Proposed Optimization~\ref{propopt2}.

\section {Empirical Results}

We conduct extensive experiments on four synthetic and seven real-world datasets, including the Two-piece graph datasets \citep{gala}, DrugOOD \citep{ji2023drugood}, and CMNIST \citep{IRM}, to evaluate the effectiveness of our proposed optimization framework, RIG. We compare RIG with several baselines, including GREA \citep{liu2022graph}, GSAT \citep{miao2022interpretable2}, CAL \citep{causalattention}, GIL \citep{li2022learning}, CIGAv2 \citep{ciga}, and GALA \citep{gala}. Details of the datasets and experimental setup are provided in Appendix~\ref{app:experiments}. 

\textbf{OOD Performance Analysis:} We report classification accuracy for the Two-piece graph and CMNIST datasets, and ROC-AUC for the DrugOOD datasets, as in related work~\citep{gala}. We repeat each evaluation five times using different random seeds and select models based on their validation performance. We report the mean and standard deviation (std) of the corresponding metric in Table~\ref{tab:drugood} and Table~\ref{tab:spmotif}. 

Table~\ref{tab:drugood} shows the OOD test performance on real-world datasets. Our framework, RIG, outperforms the state-of-the-art (SOTA) baselines on 6 datasets (in bold), including the most challenging DrugOOD-Scaffold benchmarks. 
In the remaining datasets, RIG achieves comparable performance, and for the underlined datasets, it attains low standard deviation, resulting in superior performance when considering (mean $-$ 1$\times$std). Table~\ref{tab:spmotif} presents the out-of-distribution (OOD) test performance on the synthetic Two-piece graph datasets. We observe that as the spurious correlation strength ($b$) increases, e.g., in $\{0.8, 0.9\}$ and $\{0.7, 0.9\}$, our framework RIG consistently outperforms the state-of-the-art (SOTA) baselines. We also observe comparable performance on the $\{0.8, 0.6\}$ and $\{0.8, 0.7\}$ datasets.

\begin{table}[!htbp]
\centering
\caption{Test performance (\%) for Two-piece graph datasets (mean $\pm$ std). Here \{a, b\} refers to the invariant correlation strength and spurious correlation strength, respectively.}
\small
\setlength{\tabcolsep}{1pt}
\label{tab:spmotif}
\begin{tabular}{lcccc}
\toprule
\{$a,b$\} & \{0.8,0.6\} & \{0.8,0.7\} & \{0.8,0.9\} & \{0.7,0.9\} \\
\midrule
ERM        & 77.36{\scriptsize$\pm$0.80}  & 74.64{\scriptsize$\pm$1.70}  & 50.77{\scriptsize$\pm$3.40}  & 42.09{\scriptsize$\pm$2.23} \\
GREA       & 82.81{\scriptsize$\pm$0.68}  & 82.26{\scriptsize$\pm$0.64}  & 49.02{\scriptsize$\pm$3.42}  & 39.52{\scriptsize$\pm$2.14} \\
GSAT       & 81.25{\scriptsize$\pm$0.38}  & 79.12{\scriptsize$\pm$1.27}  & 46.71{\scriptsize$\pm$2.00}  & 36.45{\scriptsize$\pm$1.04} \\
GIL        & 83.59{\scriptsize$\pm$0.30}  & 82.97{\scriptsize$\pm$0.23}  & 51.62{\scriptsize$\pm$1.02}  & 39.85{\scriptsize$\pm$2.32} \\
CAL        & 73.07{\scriptsize$\pm$6.71}  & 70.58{\scriptsize$\pm$13.65} & 54.03{\scriptsize$\pm$10.07} & 46.87{\scriptsize$\pm$2.94} \\
CIGAv2     & 74.41{\scriptsize$\pm$7.27}  & 70.67{\scriptsize$\pm$12.41} & 49.24{\scriptsize$\pm$7.70}  & 38.57{\scriptsize$\pm$5.20} \\
GALA       & 83.25{\scriptsize$\pm$0.88}  & 81.43{\scriptsize$\pm$0.59}  & 76.51{\scriptsize$\pm$1.93}  & 64.44{\scriptsize$\pm$4.83} \\
RIG (ours) & 83.03{\scriptsize$\pm$0.58}  & 82.05{\scriptsize$\pm$1.36} & \textbf{77.82 {\scriptsize$\pm$1.78}}  & \textbf{65.56{\scriptsize$\pm$4.49}} \\
\bottomrule
\end{tabular}
\end{table}

\textbf{PID Estimation:} To further check if our alternating optimization is indeed maximizing redundant information, we estimate the Partial Information Decomposition (PID) values via convex optimization (Definition~\ref{def_brojaRedUni}). Specifically, we decompose the total information $\mut{Y}{\hat{Y}_s, \hat{Y}_c}$ that the spurious predictions $\hat{Y}_s$ and causal predictions $\hat{Y}_c$ provide about the target $Y$ into four non-negative components: redundancy, unique information in $\hat{Y}_c$ (Uniq\_C), unique information in $\hat{Y}_s$ (Uniq\_S), and synergy. As shown in Fig.~\ref{fig:pid}, RIG exhibits a more balanced decomposition compared to the other two methods, with moderate redundancy and dominant Uniq\_C over Uniq\_S. This indicates that RIG effectively separates both the spurious and invariant graphs, but the spurious graph can have correlation with $Y$. In particular, the dominance of Uniq\_C suggests that the model prioritizes invariant information that is more predictive of the target, thereby capturing the underlying causal structure more accurately (see the accuracies from core and spurious graphs in Table \ref{tab:PID} in Appendix \ref{app:PID} with more details). Appendix includes: \textbf{hyperparameter selection} (Appendix~\ref{app:hyperparameter}), \textbf{ablation study} for different steps (Appendix~\ref{app:ablation}), resource consumptions (Appendix~\ref{app:resource}) and interpretability visualizations (Appendix \ref{app:visualization}).
\begin{figure}[!ht]
    \centering
\includegraphics[width=0.85\linewidth]{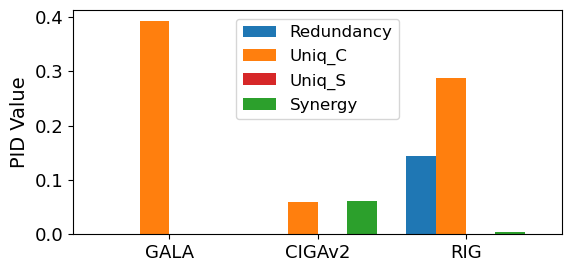}
    \caption{Comparison of PID values across baseline methods with Two-piece dataset \{0.8,0.9\}.}
    \label{fig:pid}
\end{figure}

\textbf{Conclusion:} This work addresses the challenge of learning invariant graph representations for OOD generalization. Leveraging the information-theoretic tool Partial Information Decomposition (PID), we propose \textit{RIG}, a multi-level optimization framework that isolates invariant from spurious components by maximizing redundant information. Experiments on synthetic and real-world datasets demonstrate its effectiveness in improving OOD generalization. Future work will study extensions beyond the graph domain.

\bibliographystyle{abbrvnat}
\bibliography{reference.bib}

\begin{thebibliography}{58}
\providecommand{\natexlab}[1]{#1}
\providecommand{\url}[1]{\texttt{#1}}
\expandafter\ifx\csname urlstyle\endcsname\relax
  \providecommand{\doi}[1]{doi: #1}\else
  \providecommand{\doi}{doi: \begingroup \urlstyle{rm}\Url}\fi

\bibitem[Ahuja et~al.(2021)Ahuja, Caballero, Zhang, Gagnon-Audet, Bengio, Mitliagkas, and Rish]{ahuja2021invariance}
K.~Ahuja, E.~Caballero, D.~Zhang, J.-C. Gagnon-Audet, Y.~Bengio, I.~Mitliagkas, and I.~Rish.
\newblock Invariance principle meets information bottleneck for out-of-distribution generalization.
\newblock \emph{Advances in Neural Information Processing Systems}, 34:\penalty0 3438--3450, 2021.

\bibitem[Arjovsky et~al.(2019)Arjovsky, Bottou, Gulrajani, and Lopez-Paz]{IRM}
M.~Arjovsky, L.~Bottou, I.~Gulrajani, and D.~Lopez-Paz.
\newblock Invariant risk minimization.
\newblock \emph{arXiv preprint arXiv:1907.02893}, 2019.

\bibitem[Bertschinger et~al.(2014)Bertschinger, Rauh, Olbrich, Jost, and Ay]{bertschinger2014quantifying}
N.~Bertschinger, J.~Rauh, E.~Olbrich, J.~Jost, and N.~Ay.
\newblock Quantifying unique information.
\newblock \emph{Entropy}, 16\penalty0 (4):\penalty0 2161--2183, 2014.

\bibitem[Chen et~al.(2022)Chen, Zhang, Bian, Yang, Kaili, Xie, Liu, Han, and Cheng]{ciga}
Y.~Chen, Y.~Zhang, Y.~Bian, H.~Yang, M.~Kaili, B.~Xie, T.~Liu, B.~Han, and J.~Cheng.
\newblock Learning causally invariant representations for out-of-distribution generalization on graphs.
\newblock \emph{Advances in Neural Information Processing Systems}, 35:\penalty0 22131--22148, 2022.

\bibitem[Chen et~al.(2023)Chen, Bian, Zhou, Xie, Han, and Cheng]{gala}
Y.~Chen, Y.~Bian, K.~Zhou, B.~Xie, B.~Han, and J.~Cheng.
\newblock Does invariant graph learning via environment augmentation learn invariance?
\newblock \emph{Advances in Neural Information Processing Systems}, 36:\penalty0 71486--71519, 2023.

\bibitem[Cover and Thomas(2012)]{cover2012elements}
T.~M. Cover and J.~A. Thomas.
\newblock \emph{{Elements of Information Theory}}.
\newblock John Wiley \& Sons, 2012.

\bibitem[Dai et~al.(2024)Dai, Zhao, Zhu, Xu, Guo, Liu, Tang, and Wang]{dai2024comprehensive}
E.~Dai, T.~Zhao, H.~Zhu, J.~Xu, Z.~Guo, H.~Liu, J.~Tang, and S.~Wang.
\newblock A comprehensive survey on trustworthy graph neural networks: Privacy, robustness, fairness, and explainability.
\newblock \emph{Machine Intelligence Research}, 21\penalty0 (6):\penalty0 1011--1061, 2024.

\bibitem[Dewan et~al.(2024)Dewan, Zawar, Saxena, Chang, Luo, and Bisk]{diffusionPID}
S.~Dewan, R.~Zawar, P.~Saxena, Y.~Chang, A.~Luo, and Y.~Bisk.
\newblock Diffusion pid: Interpreting diffusion via partial information decomposition.
\newblock \emph{Advances in Neural Information Processing Systems}, 37:\penalty0 2045--2079, 2024.

\bibitem[Ding et~al.(2022)Ding, Xu, Tong, and Liu]{ding2022data}
K.~Ding, Z.~Xu, H.~Tong, and H.~Liu.
\newblock Data augmentation for deep graph learning: A survey.
\newblock \emph{ACM SIGKDD Explorations Newsletter}, 24\penalty0 (2):\penalty0 61--77, 2022.

\bibitem[Dissanayake et~al.(2024)Dissanayake, Hamman, Halder, Sucholutsky, Zhang, and Dutta]{dissanayake2024quantifying}
P.~Dissanayake, F.~Hamman, B.~Halder, I.~Sucholutsky, Q.~Zhang, and S.~Dutta.
\newblock Quantifying knowledge distillation using partial information decomposition.
\newblock In \emph{International Conference on Artificial Intelligence and Statistics}, 2024.

\bibitem[Dutta and Hamman(2023)]{dutta2023review}
S.~Dutta and F.~Hamman.
\newblock A review of partial information decomposition in algorithmic fairness and explainability.
\newblock \emph{Entropy}, 25\penalty0 (5):\penalty0 795, 2023.

\bibitem[Dutta et~al.(2020)Dutta, Venkatesh, Mardziel, Datta, and Grover]{dutta2020information}
S.~Dutta, P.~Venkatesh, P.~Mardziel, A.~Datta, and P.~Grover.
\newblock An information-theoretic quantification of discrimination with exempt features.
\newblock In \emph{Proceedings of the AAAI Conference on Artificial Intelligence}, volume~34, pages 3825--3833, 2020.

\bibitem[Dutta et~al.(2021)Dutta, Venkatesh, Mardziel, Datta, and Grover]{dutta2021fairness}
S.~Dutta, P.~Venkatesh, P.~Mardziel, A.~Datta, and P.~Grover.
\newblock Fairness under feature exemptions: Counterfactual and observational measures.
\newblock \emph{IEEE Transactions on Information Theory}, 67\penalty0 (10):\penalty0 6675--6710, 2021.

\bibitem[Ehrlich et~al.(2022)Ehrlich, Schneider, Wibral, Priesemann, and Makkeh]{ehrlich2022partial}
D.~A. Ehrlich, A.~C. Schneider, M.~Wibral, V.~Priesemann, and A.~Makkeh.
\newblock Partial information decomposition reveals the structure of neural representations.
\newblock \emph{arXiv preprint arXiv:2209.10438}, 2022.

\bibitem[Fan et~al.(2022)Fan, Wang, Mo, Shi, and Tang]{fan2022debiasing}
S.~Fan, X.~Wang, Y.~Mo, C.~Shi, and J.~Tang.
\newblock Debiasing graph neural networks via learning disentangled causal substructure.
\newblock \emph{Advances in Neural Information Processing Systems}, 35:\penalty0 24934--24946, 2022.

\bibitem[Fan et~al.(2023)Fan, Wang, Shi, Cui, and Wang]{fan2023generalizing}
S.~Fan, X.~Wang, C.~Shi, P.~Cui, and B.~Wang.
\newblock Generalizing graph neural networks on out-of-distribution graphs.
\newblock \emph{IEEE transactions on pattern analysis and machine intelligence}, 46\penalty0 (1):\penalty0 322--337, 2023.

\bibitem[Goswami et~al.(2023)Goswami, Merkley, and Grover]{goswami2023computing}
C.~Goswami, A.~Merkley, and P.~Grover.
\newblock Computing unique information for poisson and multinomial systems.
\newblock \emph{arXiv preprint arXiv:2305.07013}, 2023.

\bibitem[Griffith and Ho(2015)]{griffithRedInfo}
V.~Griffith and T.~Ho.
\newblock Quantifying redundant information in predicting a target random variable.
\newblock \emph{Entropy}, 17\penalty0 (7):\penalty0 4644--4653, 2015.

\bibitem[Griffith et~al.(2014)Griffith, Chong, James, Ellison, and Crutchfield]{griffith2014intersection}
V.~Griffith, E.~K. Chong, R.~G. James, C.~J. Ellison, and J.~P. Crutchfield.
\newblock Intersection information based on common randomness.
\newblock \emph{Entropy}, 16\penalty0 (4):\penalty0 1985--2000, 2014.

\bibitem[Gui et~al.(2022)Gui, Li, Wang, and Ji]{gui2022good}
S.~Gui, X.~Li, L.~Wang, and S.~Ji.
\newblock Good: A graph out-of-distribution benchmark.
\newblock \emph{Advances in Neural Information Processing Systems}, 35:\penalty0 2059--2073, 2022.

\bibitem[Guo et~al.(2024)Guo, Wen, Jin, Guo, Tang, and Chang]{guo2024investigating}
K.~Guo, H.~Wen, W.~Jin, Y.~Guo, J.~Tang, and Y.~Chang.
\newblock Investigating out-of-distribution generalization of gnns: An architecture perspective.
\newblock In \emph{Proceedings of the 30th ACM SIGKDD Conference on Knowledge Discovery and Data Mining}, pages 932--943, 2024.

\bibitem[Halder et~al.(2025)Halder, Hamman, Dissanayake, Zhang, Sucholutsky, and Dutta]{haldertowards}
B.~Halder, F.~Hamman, P.~Dissanayake, Q.~Zhang, I.~Sucholutsky, and S.~Dutta.
\newblock Towards formalizing spuriousness of biased datasets using partial information decomposition.
\newblock \emph{Transactions on Machine Learning Research}, 2025.

\bibitem[Hamman and Dutta(2023)]{hamman2023demystifying}
F.~Hamman and S.~Dutta.
\newblock Demystifying local and global fairness trade-offs in federated learning using partial information decomposition.
\newblock \emph{arXiv preprint arXiv:2307.11333}, 2023.

\bibitem[Hu et~al.(2020)Hu, Fey, Zitnik, Dong, Ren, Liu, Catasta, and Leskovec]{hu2020open}
W.~Hu, M.~Fey, M.~Zitnik, Y.~Dong, H.~Ren, B.~Liu, M.~Catasta, and J.~Leskovec.
\newblock Open graph benchmark: Datasets for machine learning on graphs.
\newblock \emph{Advances in neural information processing systems}, 33:\penalty0 22118--22133, 2020.

\bibitem[Ioffe and Szegedy(2015)]{ioffe2015batch}
S.~Ioffe and C.~Szegedy.
\newblock Batch normalization: Accelerating deep network training by reducing internal covariate shift.
\newblock In \emph{International conference on machine learning}, pages 448--456. pmlr, 2015.

\bibitem[Ji et~al.(2023)Ji, Zhang, Wu, Wu, Li, Huang, Xu, Rong, Ren, Xue, et~al.]{ji2023drugood}
Y.~Ji, L.~Zhang, J.~Wu, B.~Wu, L.~Li, L.-K. Huang, T.~Xu, Y.~Rong, J.~Ren, D.~Xue, et~al.
\newblock Drugood: Out-of-distribution dataset curator and benchmark for ai-aided drug discovery--a focus on affinity prediction problems with noise annotations.
\newblock In \emph{Proceedings of the AAAI Conference on Artificial Intelligence}, volume~37, pages 8023--8031, 2023.

\bibitem[Kipf and Welling(2016)]{kipf2016semi}
T.~N. Kipf and M.~Welling.
\newblock Semi-supervised classification with graph convolutional networks.
\newblock \emph{arXiv preprint arXiv:1609.02907}, 2016.

\bibitem[Knyazev et~al.(2019)Knyazev, Taylor, and Amer]{knyazev2019understanding}
B.~Knyazev, G.~W. Taylor, and M.~Amer.
\newblock Understanding attention and generalization in graph neural networks.
\newblock \emph{Advances in neural information processing systems}, 32, 2019.

\bibitem[Kong et~al.(2022)Kong, Li, Ding, Wu, Zhu, Ghanem, Taylor, and Goldstein]{kong2022robust}
K.~Kong, G.~Li, M.~Ding, Z.~Wu, C.~Zhu, B.~Ghanem, G.~Taylor, and T.~Goldstein.
\newblock Robust optimization as data augmentation for large-scale graphs.
\newblock In \emph{Proceedings of the IEEE/CVF conference on computer vision and pattern recognition}, pages 60--69, 2022.

\bibitem[Krueger et~al.(2021)Krueger, Caballero, Jacobsen, Zhang, Binas, Zhang, Le~Priol, and Courville]{rex}
D.~Krueger, E.~Caballero, J.-H. Jacobsen, A.~Zhang, J.~Binas, D.~Zhang, R.~Le~Priol, and A.~Courville.
\newblock Out-of-distribution generalization via risk extrapolation (rex).
\newblock In \emph{International conference on machine learning}, pages 5815--5826. PMLR, 2021.

\bibitem[Li et~al.(2022{\natexlab{a}})Li, Wang, Zhang, and Zhu]{li2022ood}
H.~Li, X.~Wang, Z.~Zhang, and W.~Zhu.
\newblock Ood-gnn: Out-of-distribution generalized graph neural network.
\newblock \emph{IEEE Transactions on Knowledge and Data Engineering}, 35\penalty0 (7):\penalty0 7328--7340, 2022{\natexlab{a}}.

\bibitem[Li et~al.(2022{\natexlab{b}})Li, Zhang, Wang, and Zhu]{li2022learning}
H.~Li, Z.~Zhang, X.~Wang, and W.~Zhu.
\newblock Learning invariant graph representations for out-of-distribution generalization.
\newblock \emph{Advances in Neural Information Processing Systems}, 35:\penalty0 11828--11841, 2022{\natexlab{b}}.

\bibitem[Liang et~al.(2023)Liang, Ling, Cheng, Obolenskiy, Liu, Pandey, Wilf, Morency, and Salakhutdinov]{liang2023multimodal}
P.~P. Liang, C.~K. Ling, Y.~Cheng, A.~Obolenskiy, Y.~Liu, R.~Pandey, A.~Wilf, L.-P. Morency, and R.~Salakhutdinov.
\newblock Multimodal learning without labeled multimodal data: Guarantees and applications.
\newblock \emph{arXiv preprint arXiv:2306.04539}, 2023.

\bibitem[Liu et~al.(2022)Liu, Zhao, Xu, Luo, and Jiang]{liu2022graph}
G.~Liu, T.~Zhao, J.~Xu, T.~Luo, and M.~Jiang.
\newblock Graph rationalization with environment-based augmentations.
\newblock In \emph{Proceedings of the 28th ACM SIGKDD Conference on Knowledge Discovery and Data Mining}, pages 1069--1078, 2022.

\bibitem[Luo et~al.(2020)Luo, Cheng, Xu, Yu, Zong, Chen, and Zhang]{luo2020parameterized}
D.~Luo, W.~Cheng, D.~Xu, W.~Yu, B.~Zong, H.~Chen, and X.~Zhang.
\newblock Parameterized explainer for graph neural network.
\newblock \emph{Advances in neural information processing systems}, 33:\penalty0 19620--19631, 2020.

\bibitem[Lyu et~al.(2024)Lyu, Clark, and Raviv]{lyu2024explicit}
A.~Lyu, A.~Clark, and N.~Raviv.
\newblock Explicit formula for partial information decomposition.
\newblock In \emph{2024 IEEE International Symposium on Information Theory (ISIT)}, pages 2329--2334, 2024.
\newblock \doi{10.1109/ISIT57864.2024.10619369}.

\bibitem[Miao et~al.(2022{\natexlab{a}})Miao, Liu, and Li]{miao2022interpretable2}
S.~Miao, M.~Liu, and P.~Li.
\newblock Interpretable and generalizable graph learning via stochastic attention mechanism.
\newblock In \emph{International conference on machine learning}, pages 15524--15543. PMLR, 2022{\natexlab{a}}.

\bibitem[Miao et~al.(2022{\natexlab{b}})Miao, Luo, Liu, and Li]{miao2022interpretable1}
S.~Miao, Y.~Luo, M.~Liu, and P.~Li.
\newblock Interpretable geometric deep learning via learnable randomness injection.
\newblock \emph{arXiv preprint arXiv:2210.16966}, 2022{\natexlab{b}}.

\bibitem[Mohamadi et~al.(2023)Mohamadi, Doretto, and Adjeroh]{mohamadi2023more}
S.~Mohamadi, G.~Doretto, and D.~A. Adjeroh.
\newblock More synergy, less redundancy: Exploiting joint mutual information for self-supervised learning.
\newblock \emph{arXiv preprint arXiv:2307.00651}, 2023.

\bibitem[Nagarajan et~al.(2020)Nagarajan, Andreassen, and Neyshabur]{nagarajan2020understanding}
V.~Nagarajan, A.~Andreassen, and B.~Neyshabur.
\newblock Understanding the failure modes of out-of-distribution generalization.
\newblock \emph{arXiv preprint arXiv:2010.15775}, 2020.

\bibitem[Pakman et~al.(2021)Pakman, Nejatbakhsh, Gilboa, Makkeh, Mazzucato, Wibral, and Schneidman]{pakman2021estimating}
A.~Pakman, A.~Nejatbakhsh, D.~Gilboa, A.~Makkeh, L.~Mazzucato, M.~Wibral, and E.~Schneidman.
\newblock Estimating the unique information of continuous variables.
\newblock \emph{Advances in neural information processing systems}, 34:\penalty0 20295--20307, 2021.

\bibitem[Pearl(2009)]{pearl2009causality}
J.~Pearl.
\newblock \emph{Causality}.
\newblock Cambridge university press, 2009.

\bibitem[Sui et~al.(2022)Sui, Wang, Wu, Lin, He, and Chua]{causalattention}
Y.~Sui, X.~Wang, J.~Wu, M.~Lin, X.~He, and T.-S. Chua.
\newblock Causal attention for interpretable and generalizable graph classification.
\newblock In \emph{Proceedings of the 28th ACM SIGKDD conference on knowledge discovery and data mining}, pages 1696--1705, 2022.

\bibitem[Sui et~al.(2023)Sui, Wu, Wu, Cui, Li, Zhou, Wang, and He]{sui2023unleashing}
Y.~Sui, Q.~Wu, J.~Wu, Q.~Cui, L.~Li, J.~Zhou, X.~Wang, and X.~He.
\newblock Unleashing the power of graph data augmentation on covariate distribution shift.
\newblock \emph{Advances in Neural Information Processing Systems}, 36:\penalty0 18109--18131, 2023.

\bibitem[Tax et~al.(2017)Tax, Mediano, and Shanahan]{tax2017partial}
T.~Tax, P.~Mediano, and M.~Shanahan.
\newblock The partial information decomposition of generative neural network models.
\newblock \emph{Entropy}, 19\penalty0 (9):\penalty0 474, 2017.

\bibitem[Venkatesh et~al.(2024)Venkatesh, Bennett, Gale, Ramirez, Heller, Durand, Olsen, and Mihalas]{venkatesh2024gaussian}
P.~Venkatesh, C.~Bennett, S.~Gale, T.~Ramirez, G.~Heller, S.~Durand, S.~Olsen, and S.~Mihalas.
\newblock Gaussian partial information decomposition: Bias correction and application to high-dimensional data.
\newblock \emph{Advances in Neural Information Processing Systems}, 36, 2024.

\bibitem[Wang and Isola(2020)]{wang2020understanding}
T.~Wang and P.~Isola.
\newblock Understanding contrastive representation learning through alignment and uniformity on the hypersphere.
\newblock In \emph{International conference on machine learning}, pages 9929--9939. PMLR, 2020.

\bibitem[Williams and Beer(2010)]{williams2010nonnegativePID}
P.~L. Williams and R.~D. Beer.
\newblock Nonnegative decomposition of multivariate information.
\newblock \emph{arXiv preprint arXiv:1004.2515}, 2010.

\bibitem[Wollstadt et~al.(2023)Wollstadt, Schmitt, and Wibral]{wollstadt2023rigorous}
P.~Wollstadt, S.~Schmitt, and M.~Wibral.
\newblock A rigorous information-theoretic definition of redundancy and relevancy in feature selection based on (partial) information decomposition.
\newblock \emph{J. Mach. Learn. Res.}, 24:\penalty0 131--1, 2023.

\bibitem[Wu et~al.(2022{\natexlab{a}})Wu, Zhang, Yan, and Wipf]{wu2022handling}
Q.~Wu, H.~Zhang, J.~Yan, and D.~Wipf.
\newblock Handling distribution shifts on graphs: An invariance perspective.
\newblock \emph{arXiv preprint arXiv:2202.02466}, 2022{\natexlab{a}}.

\bibitem[Wu et~al.(2022{\natexlab{b}})Wu, Wang, Zhang, He, and Chua]{dir}
Y.-X. Wu, X.~Wang, A.~Zhang, X.~He, and T.-S. Chua.
\newblock Discovering invariant rationales for graph neural networks.
\newblock \emph{arXiv preprint arXiv:2201.12872}, 2022{\natexlab{b}}.

\bibitem[Wu et~al.(2020)Wu, Pan, Chen, Long, Zhang, and Yu]{wu2020comprehensive}
Z.~Wu, S.~Pan, F.~Chen, G.~Long, C.~Zhang, and P.~S. Yu.
\newblock A comprehensive survey on graph neural networks.
\newblock \emph{IEEE transactions on neural networks and learning systems}, 32\penalty0 (1):\penalty0 4--24, 2020.

\bibitem[Xu et~al.(2018{\natexlab{a}})Xu, Hu, Leskovec, and Jegelka]{xu2018powerful}
K.~Xu, W.~Hu, J.~Leskovec, and S.~Jegelka.
\newblock How powerful are graph neural networks?
\newblock \emph{arXiv preprint arXiv:1810.00826}, 2018{\natexlab{a}}.

\bibitem[Xu et~al.(2018{\natexlab{b}})Xu, Li, Tian, Sonobe, Kawarabayashi, and Jegelka]{xu2018representation}
K.~Xu, C.~Li, Y.~Tian, T.~Sonobe, K.-i. Kawarabayashi, and S.~Jegelka.
\newblock Representation learning on graphs with jumping knowledge networks.
\newblock In \emph{International conference on machine learning}, pages 5453--5462. pmlr, 2018{\natexlab{b}}.

\bibitem[Yang et~al.(2022)Yang, Zeng, Wu, Jia, and Yan]{yang2022learning}
N.~Yang, K.~Zeng, Q.~Wu, X.~Jia, and J.~Yan.
\newblock Learning substructure invariance for out-of-distribution molecular representations.
\newblock \emph{Advances in Neural Information Processing Systems}, 35:\penalty0 12964--12978, 2022.

\bibitem[Yehudai et~al.(2021)Yehudai, Fetaya, Meirom, Chechik, and Maron]{yehudai2021local}
G.~Yehudai, E.~Fetaya, E.~Meirom, G.~Chechik, and H.~Maron.
\newblock From local structures to size generalization in graph neural networks.
\newblock In \emph{International Conference on Machine Learning}, pages 11975--11986. PMLR, 2021.

\bibitem[Yu et~al.(2020)Yu, Xu, Rong, Bian, Huang, and He]{GIB}
J.~Yu, T.~Xu, Y.~Rong, Y.~Bian, J.~Huang, and R.~He.
\newblock Graph information bottleneck for subgraph recognition.
\newblock \emph{arXiv preprint arXiv:2010.05563}, 2020.

\bibitem[Zou et~al.(2023)Zou, Liu, Miao, Fung, Chang, and Li]{zou2023gdl}
D.~Zou, S.~Liu, S.~Miao, V.~Fung, S.~Chang, and P.~Li.
\newblock Gdl-ds: A benchmark for geometric deep learning under distribution shifts.
\newblock \emph{arXiv preprint arXiv:2310.08677}, 2023.

\end{thebibliography}

\appendix
\newpage
\onecolumn




\section{Limitations}
(i) RIG does not consistently achieve superior performance across all datasets, particularly on the less challenging ones, which may be attributed to the approximation involved in estimating redundant information. Future work will study further improvements in both the estimation and implementation procedures. (ii) The sampling strategy for the contrastive loss relies on the spurious dependencies captured by the assistant model A, which can introduce variability in performance. (iii) A rigorous proof of the optimization convergence remains an important direction for future research. 

\section{Appendix to Theoretical Results}
\label{app:theory}

\begin{lemma}[Noisy Feature]
\label{app:lemma3}
Let $A=Y+N$ where $Y\sim {Bern}(1/2)$ is a random variable taking values $+1$ or $-1$ and the noise $N\sim \mathcal{N}(0, \sigma^2_N)$ is a Gaussian random variable independent of $Y$. Then, mutual information $$\mut{Y}{A} \leq \frac{1}{2} \log_2{\left(1+ \frac{1}{\sigma^2_N} \right)} .$$
\end{lemma}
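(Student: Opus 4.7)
The plan is to use the standard decomposition of mutual information via differential entropy, combined with the Gaussian maximum-entropy bound. First I would write
\begin{equation*}
\mut{Y}{A} = h(A) - h(A \mid Y),
\end{equation*}
which is valid because $A = Y + N$ has a continuous density (a mixture of two Gaussians) even though $Y$ is discrete.

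Next I would compute $h(A \mid Y)$ exactly. Conditioning on $Y = y \in \{-1, +1\}$ leaves $A = y + N$, which is Gaussian with variance $\sigma_N^2$, so $h(A \mid Y = y) = \tfrac{1}{2}\log_2(2\pi e \sigma_N^2)$ for each $y$, and therefore $h(A \mid Y) = \tfrac{1}{2}\log_2(2\pi e \sigma_N^2)$.

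To upper-bound $h(A)$, I would invoke the maximum-entropy property of the Gaussian: among all continuous distributions with a fixed variance, the Gaussian maximizes differential entropy. Since $Y$ takes values $\pm 1$ with probability $1/2$, $\mathbb{E}[Y] = 0$ and $\operatorname{Var}(Y) = 1$; combined with $N \indep Y$ and $\mathbb{E}[N] = 0$, this gives $\operatorname{Var}(A) = 1 + \sigma_N^2$. Hence
\begin{equation*}
h(A) \leq \tfrac{1}{2}\log_2\bigl(2\pi e (1 + \sigma_N^2)\bigr).
\end{equation*}

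Subtracting the two quantities yields the claimed bound $\tfrac{1}{2}\log_2\bigl(1 + 1/\sigma_N^2\bigr)$, which is just the familiar AWGN capacity expression with signal power $1$ and noise power $\sigma_N^2$. The only subtle point (and the closest thing to an obstacle) is justifying that $h(A)$ is well-defined and that the Gaussian upper bound applies to a Gaussian-mixture density; this is standard but worth a sentence, since $A$'s distribution is not itself Gaussian. No integration or moment computation beyond the variance of $Y$ is needed.
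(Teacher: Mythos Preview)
Your proposal is correct and follows essentially the same argument as the paper: both decompose $\mut{Y}{A}=h(A)-h(A\mid Y)$, evaluate the conditional term as the differential entropy $\tfrac{1}{2}\log_2(2\pi e\,\sigma_N^2)$ of a Gaussian (the paper writes this as $H(Y+N\mid Y)=H(N)$ via $N\indep Y$), and then apply the Gaussian maximum-entropy bound to $h(A)$ using $\operatorname{Var}(A)=1+\sigma_N^2$, citing Cover and Thomas. Your remark about the mixture density being well-defined is a nice extra justification not spelled out in the paper, but the route is the same.
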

\begin{proof}
\begin{align}
\mut{Y}{A} = H(A)-H(A|Y)
&=H(Y+N)-H(Y+N|Y)\\
&=H(Y+N)-H(N|Y)\\
&=H(Y+N)-H(N), \text{  since } N\indep Y\\
&\overset{(a)}{\leq} \frac{1}{2} \log_2 2\pi e\left(1+\sigma_N^2\right)-\frac{1}{2} \log_2 2\pi e\left(\sigma_N^2\right)\\
&=\frac{1}{2} \log_2 \left( 1 + \frac{1}{\sigma_N^2} \right).
\end{align}
Here, (a) holds because the entropy of $Y+N$ is bounded by $\frac{1}{2} \log_2 2\pi e\left(1+\sigma_N^2\right)$ (proved in \citet[Theorem 8.6.5]{cover2012elements}). We also refer to \citet[Chapter 9]{cover2012elements} for a discussion on Gaussian channels.
\end{proof}

If we keep the distribution of $Y$ fixed and vary the noise variance $\sigma_N^2$, then we will observe a decreasing trend of $\mut{Y}{A}$ with increasing $\sigma_N^2$.  Fig.\ref{Fig:MIvsN} shows the exact trend where $Y$ is a Bernoulli random variable.

\begin{figure*}[htbp!]
   \centering
    \includegraphics[height=130pt]{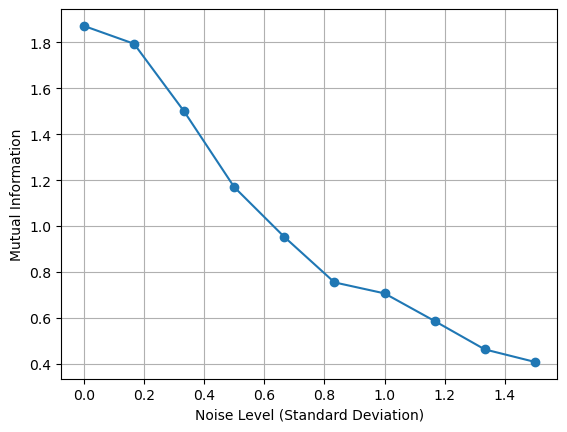} 
    \caption{Mutual Information vs. Noise Level ($Y$ is Bernoulli)}
     \label{Fig:MIvsN}
\end{figure*}

\section{Appendix to Practical Implementation of the Objective}

\label{app:objective}

As the estimation of mutual information is highly expensive, we adopt contrastive learning $\mathcal{L}_{cont}(\hat{G^p_c}, \hat{G^n_c})$ to approximate $\mut{\hat{G}^p_c}{\hat{G}^n_c \mid Y}$ \citep{ciga,gala}. We define $\mut{\hat{G}^p_c}{\hat{G}^n_c \mid Y}$ as follows:

$$ \mut{\hat{G}^p_c}{\hat{G}^n_c \mid Y} \approx \mathbb{E}_{\substack{\{ \hat{G}^p_c, \hat{G}^n_c \} \sim \mathcal{P}_h(G \mid \mathcal{Y}=Y),\\ \{ \hat{G}_c^i \}_{i=1}^M \sim \mathcal{P}_h(G \mid \mathcal{Y} \neq Y)}} 
\left[
\log \frac{
    e^{\Phi(f_{\hat{G}^p_c}, f_{\hat{G}^n_c})}
}{e^{\Phi(f_{\hat{G}^p_c}, f_{\hat{G}^n_c})} + \sum_{i=1}^{M} {e^{\Phi(f_{\hat{G}_c}, f_{\hat{G}^i_c})}}
}\right] $$

Here, \((\hat{G}^p_c, \hat{G}^n_c)\) are subgraphs extracted by \(h\) from sets \(\{G^p\}\) and \(\{G^n\}\), respectively, which share the same label \(Y\). The set \(\{\hat{G}^c_i\}_{i=1}^M\) consists of subgraphs extracted by \(h\) from graphs \(G^i\) that have labels different from \(Y\). \(\mathcal{P}_h(G \mid \mathcal{Y} = Y)\) denotes the push-forward distribution of \(P(G \mid \mathcal{Y} = Y)\) through the rationale generator \(h\), where \(P(G \mid \mathcal{Y} = Y)\) is the conditional distribution of graphs given a label \(Y\), and \(P(G \mid \mathcal{Y} \neq Y)\) is the conditional distribution given a label different from \(Y\). The subgraphs \(\hat{G}_c = h(G)\), \(\hat{G}^p_c = h(\hat{G}^p)\), \(\hat{G}^n_c = h(\hat{G}^n)\), and \(\hat{G}^i_c = h(G^i)\) are the outputs of the rationale generator \(h\), and their representations \(f_{\hat{G}_c}\), \(f_{\hat{G}^p_c}\), \(f_{\hat{G}^n_c}\), and \(f_{\hat{G}^i_c}\) are the embeddings of these subgraphs. The function $\Phi$ denotes a similarity measure between representations. As \(M \to \infty\), $\mathcal{L}_{cont}(\hat{G^p_c}, \hat{G^n_c})$ approximates $\mut{\hat{G}^p_c}{\hat{G}^n_c \mid Y}$ \citep{wang2020understanding}.

\section{Appendix to Experiments}
\label{app:experiments}
\textbf{Datasets:} 

\textit{Two-piece graph datasets.} Two-piece graph datasets \citep{gala} are three-class synthetic datasets based on BAMotif \citep{luo2020parameterized}. The task is to identify which of the three motifs — House, Cycle, or Crane — is embedded in each graph. Each dataset is parameterized by two variables \{a,b\}, which control the strength of invariant and spurious correlations, respectively, leading to different relationships between $H(C|Y)$ and $H(S|Y)$. 

\textit{DrugOOD datasets:}  We use six datasets from the DrugOOD benchmark \citep{ji2023drugood}, namely EC50-Assay, EC50-Scaffold, EC50-Size, Ki-Assay, Ki-Scaffold, and Ki-Size, all of which contain core-level annotation noise. The task is to predict ligand-based affinity, with complex distribution shifts arising from variations in assays, molecular scaffolds, and molecule sizes. 

\textit{CMNIST dataset:} We use graph-structured data derived from the ColoredMNIST (CMNIST) dataset~\citep{IRM}. The graphs are generated using the conversion algorithm proposed by~\citet{knyazev2019understanding}, which introduces distinct distribution shifts in the node attributes. The classification task is to determine whether an image contains a digit from $0-4$ or $5-9$. 

In our experiments, we follow standard practices for optimizing GNNs and tuning hyperparameters. The details are provided below.

\textbf{GNN Backbone:}

In line with previous studies \citep{ciga,gala}, we employ the interpretable GNN as the underlying backbone. Formally, given a graph $G$ containing $n$ nodes, a soft mask is predicted as follows: $$Z = \mathrm{GNN}(G) \in \mathbb{R}^{n \times h}, \quad M = a(ZZ^T) \in \mathbb{R}^{n \times n}.$$
where $a$ computes the sampling weights for each edge using a multilayer perceptron (MLP): $M_{ij} = \mathrm{MLP}([Z_i, Z_j])$. Based on the continuous sampling score $M$, $h$ can sample discrete edges according to the predicted scores \citep{miao2022interpretable2}. For each dataset, we sample $r\%$ of all edges, where $r$ is determined based on validation performance; for CMNIST, we follow previous work and fix the ratio to $80\%$.

  
For a fair comparison, we use the same GNN architecture as graph encoders for all methods. By default, we use 3-layer Graph Isomorphism Network (GIN) \citep{xu2018powerful} with Batch Normalization \citep{ioffe2015batch} between layers and Jumping Knowledge (JK) residual connections at the last layer \citep{xu2018representation}. The hidden dimension is set to 32 for the Two-piece and CMNIST datasets, and 128 for the DrugOOD datasets. By default, we use mean pooling over all nodes except DrugOOD datasets, where we follow a 4-layer GIN with sum pooling. 

\textbf{Model Optimization and Selection Criteria:} 

By default, we use the Adam optimizer with a learning rate of
$1e-3$ and a batch size of 32 for all models and all datasets. Except for DrugOOD datasets, we use a batch size of 128, and for CMNIST, we use a batch size of 256 following GALA. To avoid underfitting (Step 0: Warm up), we pretrain models for 10 epochs for all datasets, except for CMNIST, where we pretrain for 5 epochs. To avoid overfitting, we also employ an early stopping of 5 epochs according to the validation performance during Step 2: Maximizing Objective. We use a dropout rate of 0.5 for the CMNIST and DrugOOD datasets. All experiments are repeated with 5 different random seeds. The mean and standard deviation are calculated from 5 runs. 

\textbf{Implementation Details for Baselines:} 

We implement GREA \citep{liu2022graph}, GSAT \citep{miao2022interpretable2}, CAL \citep{causalattention}, GIL~\citep{li2022learning}, CIGA \citep{ciga}, and GALA \citep{gala}, following the  implementation provided in~\citet{gala}. We include some specific details here:

\textit{GREA~\citep{liu2022graph}}. We use a penalty weight of 1 for GREA and the same interpretable ratio as others.

\textit{GSAT~\citep{miao2022interpretable2}}. Following prior work, we use an interpretability ratio of 0.7, a penalty weight of 1, a decay rate of 10\%, and a decay interval set to half of the pretraining epochs. 

\textit{CAL~\citep{causalattention}}. We adopt the same interpretability ratio as previous studies, with the penalty weight selected from \{0.1, 0.5, 1.0\} and choose the one with the best validation performance.

\textit{CIGA~\citep{li2022learning}}. All penalty weights are set according to the authors' recommendation. We do not implement CIGAv1, as GALA represents an improved version of CIGAv1.

For GREA, GSAT, CAL, and CIGAv2 the number of environments is not needed. For GIL, CIGA, GALA, and RIG, the number of environments (used as the number of clusters for GALA) is fixed for the Two-piece and CMNIST datasets, as these values are known: the Two-piece graph dataset contains 3 spurious graph types, and CMNIST contains 2 environments. For DrugOOD datasets, we search the number of environments in the set \(\{2, 3, 5, 10, 20\}\), following previous practice~\citep{yang2022learning}. 

\textit{GIL~\citep{li2022learning}}. We select the penalty weight from $\{1e-5, 1e-3, 1e-1\}$ and interpretability ratio, same as others. 

\textit{GALA~\citep{gala}.} We follow the original GALA framework and use the proposed GNN model to implement their method. For the environment assistant model \(A\), we adopt a vanilla GNN for Two-piece graph datasets, EC50-Size, Ki-Assay, and Ki-Scaffold, and for EC50-Assay, EC50-Scaffold, Ki-Size, and CMNIST, we use XGNN (interpretable GNN backbone). We train \(A\) using only cross-entropy loss. The sampling proxy is constructed based on cluster predictions. We search over penalty weights ($\lambda_3$) \{0.5, 1, 2, 4, 8, 16, 32, 64, 128, 256\} for each dataset and report the best-performing value. 

\textbf{Implementation Details for Our Proposed Method RIG:} 

For a fair comparison, we use the same assistant model, penalty weights \(\lambda_3\), and number of environments as in GALA. The hyperparameter $\lambda_2$ is tuned over \{0.1, 1, 1.5, 2, 2.5, 3, 4\} for all the datasets. We report the best performance obtained in each case. The epoch lengths $e_1$ and $e_2$ are selected $10$ for Two-piece graph datasets, EC50-Assay, and $20$ for Ki-Assay, Ki-Scaffold, and Ki-Size datasets. For CMNIST, EC50-Scaffold, and EC50-Size datasets, $e_1$ is chosen 10 and $e_2$ is chosen 50. The early stopping is deployed after $100$ epochs for the Two-piece graph, Ki-Assay, Ki-Scaffold, and Ki-Size datasets, and $200$ epochs for CMNIST, EC50-Assay, EC50-Scaffold, and EC50-Size datasets. These parameters are selected to ensure stable performance. 

\subsection{Hyperparameter Selection}\label{app:hyperparameter}

To stabilize the optimization of Eq. \eqref{eq:step1}, we replace hyperparameter $\lambda_1$ with a positive learnable parameter $\mu$ and introduce an additional regularization term $(\log \mu)^2$. This enables the algorithm to learn an effective value of $\mu$ and also penalizes both very small values ($\mu \to 0$) and very large values ($\mu \to \infty$), thereby ensuring numerical stability during training.

To evaluate the sensitivity of our proposed method to the hyperparameter \(\lambda_2\) in the objective function (Eq.~\eqref{eq:step2}), we conduct experiments on two of the most challenging datasets: \(\{0.8, 0.9\}\) and EC50-Scaffold. While varying \(\lambda_2\), we keep all other hyperparameters fixed. The results in Fig.~\ref{fig:sensitivity} demonstrate that our method remains mostly stable and robust across different datasets and distribution shifts. 

The hyperparameter $\lambda_3$ is set to the same value as in GALA~\citep{gala}.
\clearpage

\begin{figure}[!htbp]
    \centering
\includegraphics[width=0.5\linewidth]{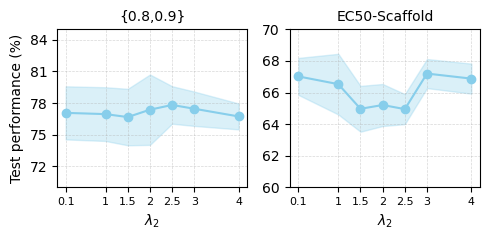}
    \caption{Sensitivity of the model to hyperparameter $\lambda_2$ across different datasets.}
    \label{fig:sensitivity}
\end{figure}

\subsection{Ablation Study} 
\label{app:ablation}
We analyze the effect of each component in our method by comparing three variants. The first variant includes only Step 0 and Step 2, while the second uses Step 1 and Step 2. The third variant incorporates all steps. Table \ref{tab:ablation} shows that combining all steps achieves the best performance, highlighting the complementary contributions of each component.
\begin{table}[!htbp]
\centering
\caption{Ablation study on the Two-piece graph dataset \{0.8, 0.9\} with hyperparameters $\lambda_2 = 2.5$ and $\lambda_3 = 128$.}
\begin{tabular}{c c c c}
\toprule
Step 0 & Step 1 & Step 2 & Performance (\%) \\
\midrule
\checkmark &        & \checkmark & 77.00 {\textsmaller{$\pm$1.95}} \\
          & \checkmark & \checkmark & 76.65 {\textsmaller{$\pm$0.96}} \\
\checkmark & \checkmark & \checkmark & \textbf{77.82 {\textsmaller{$\pm$1.78}}} \\
\bottomrule
\end{tabular}
\label{tab:ablation}
\end{table}

\subsection{PID Estimation} 
\label{app:PID}
We estimate the Partial Information Decomposition (PID) values by considering the ground-truth label $Y$ as the target variable, and the predictions $\hat{Y}_s$ and $\hat{Y}_c$ as the two sources, obtained respectively from the estimated spurious graph $\hat{G}_s$ and the invariant graph $\hat{G}_c$. In this decomposition, \emph{redundancy} represents the information about $Y$ that is shared between $\hat{Y}_s$ and $\hat{Y}_c$; \emph{Uniq\_C} denotes the information about $Y$ that is uniquely captured by $\hat{Y}_c$ but not by $\hat{Y}_s$; \emph{Uniq\_S} refers to the information uniquely captured by $\hat{Y}_s$ but not by $\hat{Y}_c$; and \emph{synergy} corresponds to the information about $Y$ that emerges only when both $\hat{Y}_s$ and $\hat{Y}_c$ are considered jointly. Causal accuracy is evaluated based on the agreement between the ground-truth label $Y$ and the invariant prediction $\hat{Y}_c$, and spurious accuracy is evaluated using $Y$ and the spurious prediction $\hat{Y}_s$. 

\begin{table}[ht!]
\centering
\caption{PID values and test accuracies for two-piece graph dataset$\{0.8,0.9\}$.}
\label{tab:PID}
\begin{tabular}{lcccccc}
\toprule
\{0.8,0.9\} & Redundancy        & Uniq\_C    & Uniq\_S    & Synergy        & Causal Acc. (\%) & Spurious Acc. (\%) \\ \midrule
GALA              & 0          & 0.3924 & 0          & 0          & 76.30       & 33.33    \\
CIGAv2             & 0.0005& 0.0589 & 0   & 0.0620 & 45.70       & 32.60    \\
RIG              & 0.1431 & 0.2877 & 0 & 0.0035 & 78.23       & 56.77   \\ \bottomrule
\end{tabular}
\end{table}

In Table~\ref{tab:PID}, we observe that for GALA the Uniq\_C component is dominant, while the other terms remain zero. This is expected as its optimization objective excludes the terms that come from spurious graphs. In contrast, CIGAv2 exhibits some redundant information, reflecting its training objective to estimate both spurious and invariant graphs. However, the accuracy results suggest that it struggles to balance these components effectively, likely due to high bias. On the other hand, RIG demonstrates a more balanced decomposition, with both redundancy and Uniq\_C outweighing Uniq\_S. This indicates that RIG successfully separates both spurious and invariant graphs. Crucially, since the unique information in $\hat{Y}_c$ dominates Uniq\_S, the model appears to prioritize invariant information more that is predictive of the target, thus capturing the causal structure effectively.

\subsection{Runtime and Resource Usage}
\label{app:resource}
We implement our methods using PyTorch and PyTorch Geometric. All experiments are conducted on an NVIDIA RTX A4500 (CUDA 12.2) and RTX 6000 GPU(CUDA 12.8). We measure the average total training time of both GALA and our proposed method across multiple datasets. For the Two-piece graph dataset \(\{0.7, 0.9\}\), EC50-Scaffold, and CMNIST, our method takes \(768.39 \pm 146.05\), \(2354.1021 \pm 890.4654\), and \(6998.0142 \pm 2066.8555\) seconds, respectively, and GALA requires \(766.11 \pm 64.21\), \(1906.34 \pm 600.79\), and \(5884.53 \pm 7650.35\) seconds, respectively. The runtime varies with the number of scripts running on a single GPU and the type of GPU used.

\subsection{Interpretability Visualization} 
\label{app:visualization}
To enhance the interpretability of the model's outputs, we visualize the edge masks produced by the interpretable GNN backbone using both GALA and our proposed optimization. We use the code provided by \citet{gala}. In these visualizations (Figs. \ref{fig:gala} and \ref{fig:rig}), pink circles denote the nodes of the ground-truth causal subgraph $G_c$, while yellow circles denote the nodes of the ground-truth spurious subgraph $G_s$. The edge color intensity reflects the attention weights assigned by the model, with darker edges indicating higher attention. If the model assigns high attention to the edges connecting the pink nodes, then we can conclude that the invariant subgraph has been correctly identified.

\begin{figure}[htbp!]
\centering
\begin{subfigure}[t]{0.9\linewidth}
    \centering
    \includegraphics[width=\linewidth]{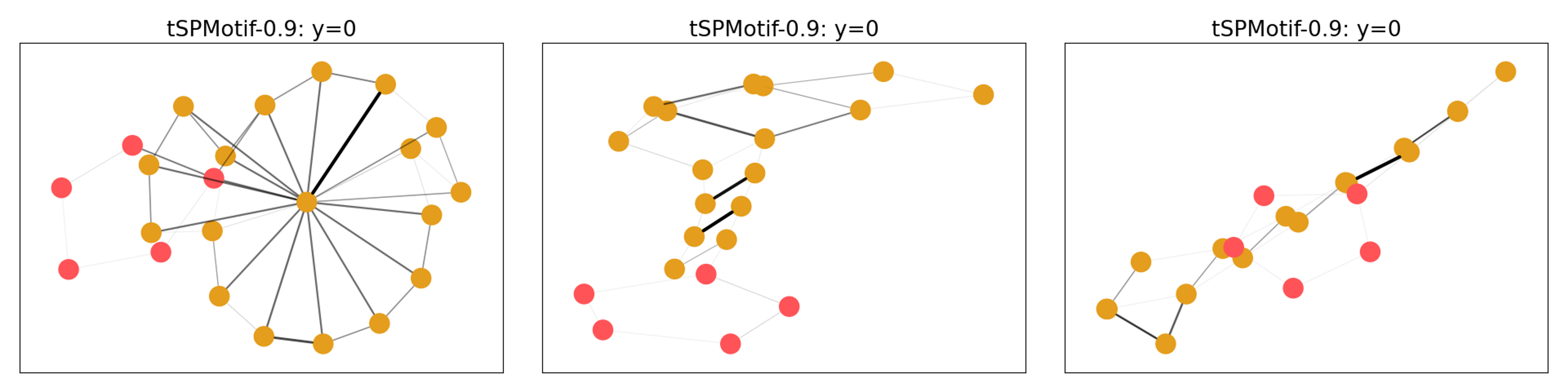}
    \label{fig:gala0}
\end{subfigure}
\begin{subfigure}[t]{0.9\linewidth}
    \centering
    \includegraphics[width=\linewidth]{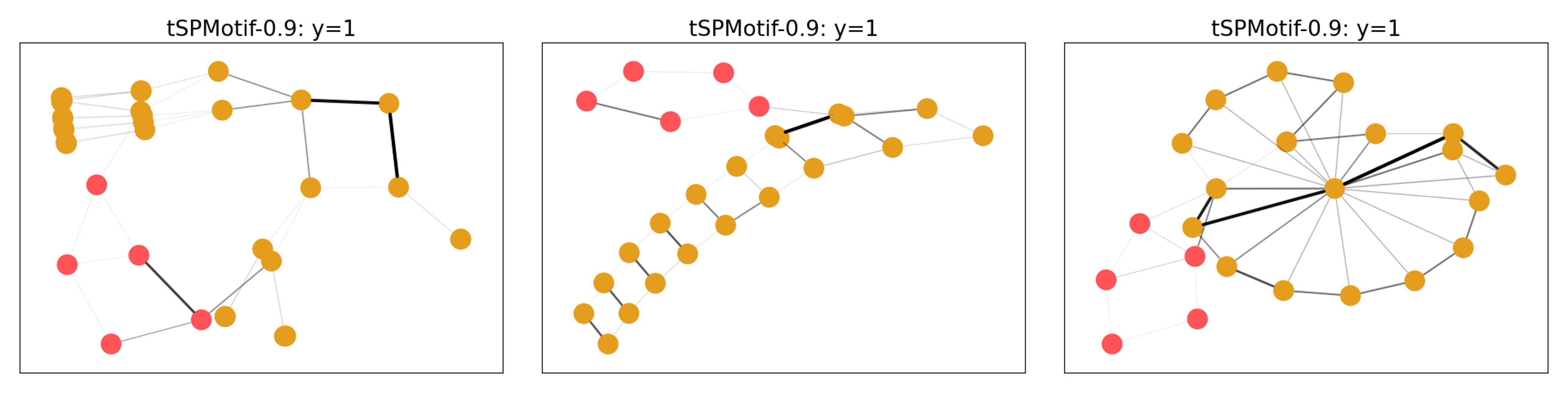}
    \label{fig:gala1}
\end{subfigure}
\begin{subfigure}[t]{0.9\linewidth}
    \centering
    \includegraphics[width=\linewidth]{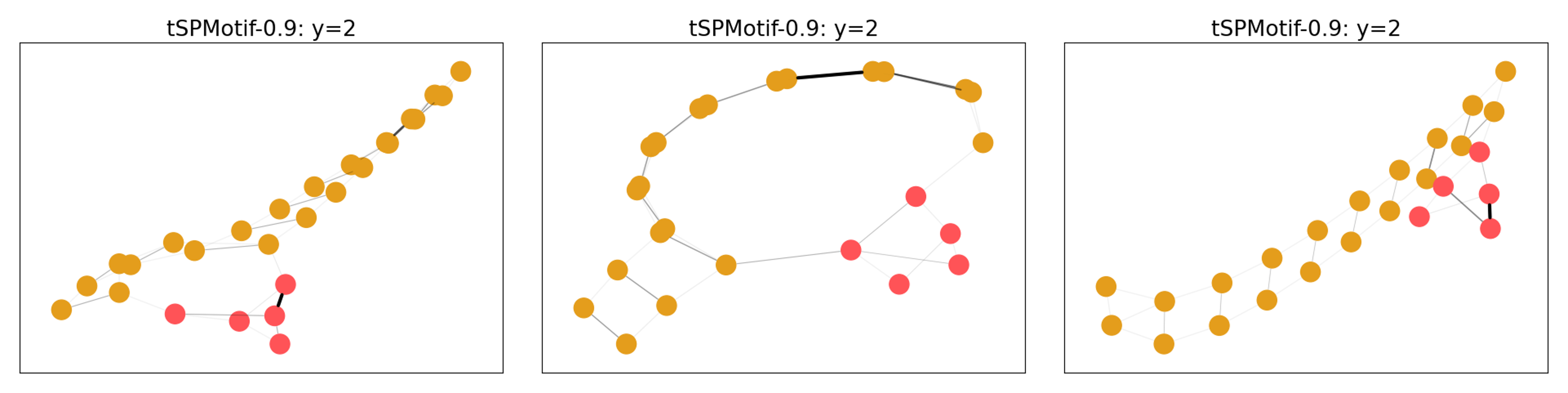}
    \label{fig:gala2}
\end{subfigure}
\caption{Interpretation visualization from the Two-piece graph dataset \{0.8,0.9\} where GALA misclassifies.}
\label{fig:gala}
\end{figure}

Fig.~\ref{fig:gala} presents examples of GALA misclassifications on the {0.8, 0.9} dataset. The model fails to correctly identify the edges connecting the ground-truth nodes. For $y=2$, it identifies one ground-truth edge but still produces an incorrect classification.
The failure might be due to not identifying a sufficient number of edges associated with the important nodes. In Fig.~\ref{fig:rig}, we observe that RIG identifies more edges corresponding to the ground-truth nodes, which might result in a correct prediction for these graphs in contrast to GALA. 

\begin{figure}[htbp!]
\centering
\begin{subfigure}[t]{0.9\linewidth}
    \centering
    \includegraphics[width=\linewidth]{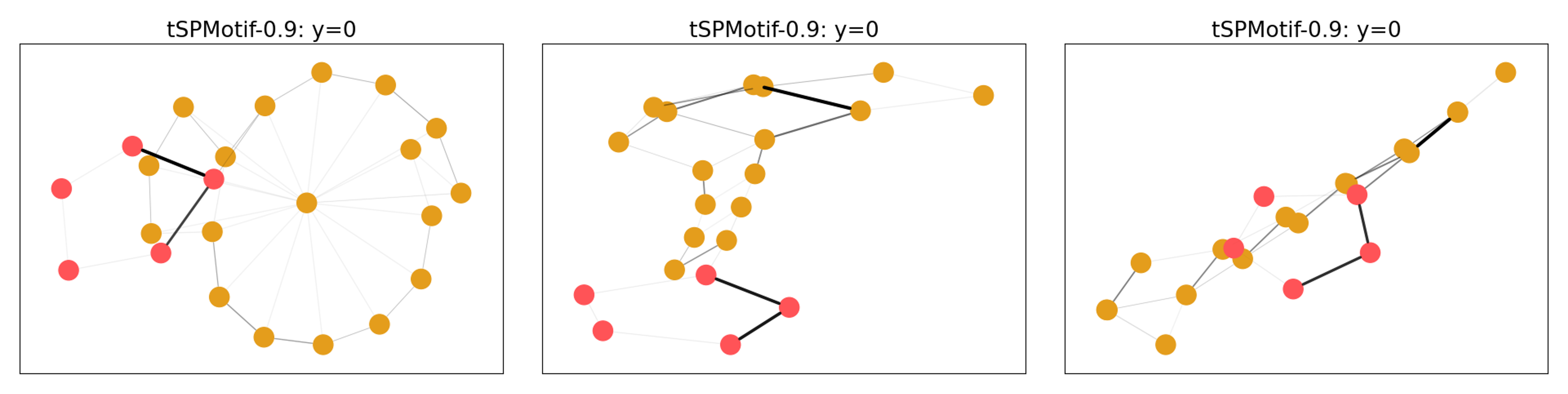}
    \caption{}
    \label{fig:rig0}
\end{subfigure}
\vspace{0.01em}
\begin{subfigure}[t]{0.9\linewidth}
    \centering
    \includegraphics[width=\linewidth]{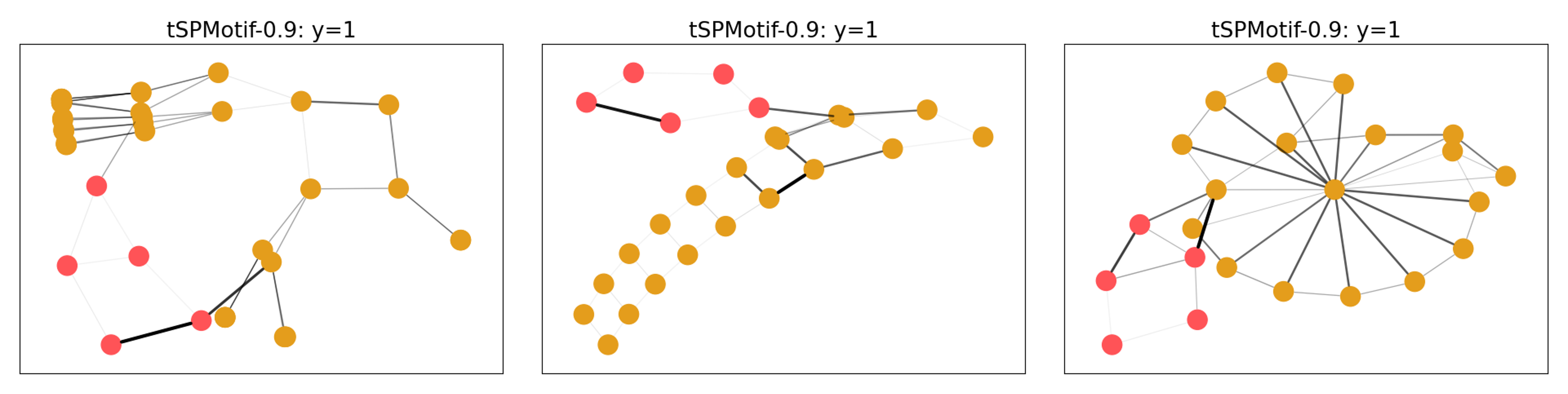}
    \caption{}
    \label{fig:rig1}
\end{subfigure}
\vspace{0.01em}
\begin{subfigure}[t]{0.9\linewidth}
    \centering
    \includegraphics[width=\linewidth]{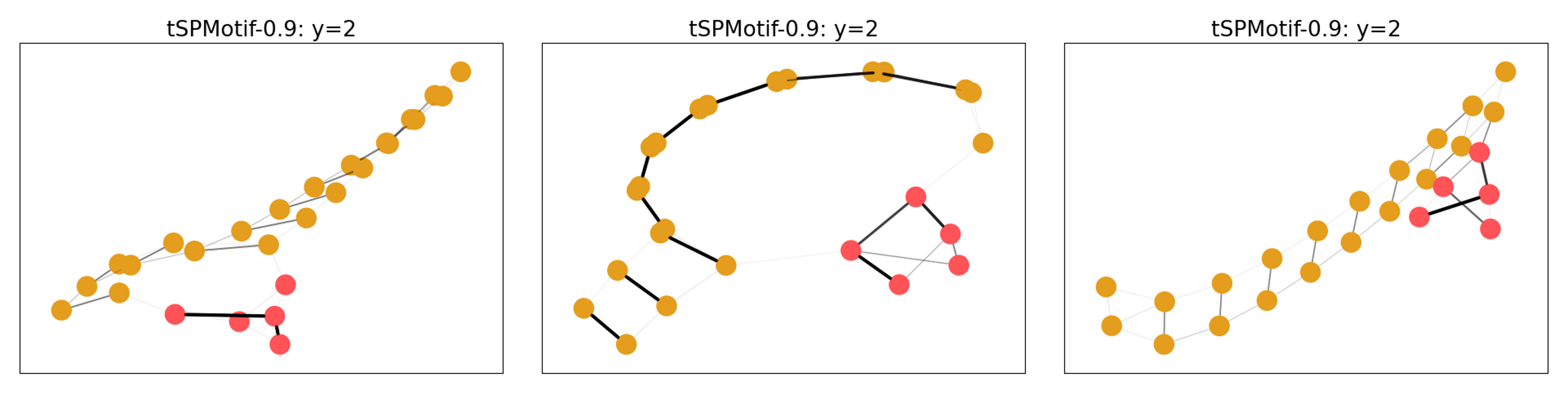}
    \caption{}
    \label{fig:rig2}
\end{subfigure}
\caption{Interpretation visualization from the Two-piece graph dataset \{0.8,0.9\} where RIG classifies the graphs correctly.}
\label{fig:rig}
\end{figure}



\end{document}